\def\eqref#1{equation~\ref{#1}}
\def\1{\bm{1}}
\newcommand{\train}{\mathcal{D}}
\DeclareMathAlphabet{\mathsfit}{\encodingdefault}{\sfdefault}{m}{sl}
\SetMathAlphabet{\mathsfit}{bold}{\encodingdefault}{\sfdefault}{bx}{n}
\newcommand{\E}{\mathbb{E}}
\newcommand{\Ls}{\mathcal{L}}
\newcommand{\R}{\mathbb{R}}
\newtheorem{theorem}{Theorem}
\newtheorem{proposition}{Proposition}
\newcommand{\N}{\mathcal{N}}
\newcommand{\I}{\mathbf{I}}
\newcommand{\G}{\widehat{\mathbf{\Gamma}}}
\newcommand{\Si}{\mathbf{\Sigma}}
\title{\textit{Carr\'{e} du champ} flow matching: better quality-generalisation tradeoff in generative models}
\author{Jacob Bamberger\thanks{Equal contribution, ordered alphabetically.} \\
Institute of \\
Artificial Intelligence \\
Medical University of Vienna, \\
% Department of \\
% Computer Science \\
University of Oxford \\
\And
Iolo Jones$^*$ \\
Institute of \\
Artificial Intelligence \\
Medical University of Vienna, \\
% Department of Mathematics \\
Durham University \\
\And
Dennis Duncan \\
Institute of \\
Artificial Intelligence \\
Medical University of Vienna
\And
Michael Bronstein \\
% Department of Computer Science \\
University of Oxford, \\
AITHYRA
\And
Pierre Vandergheynst \\
EPFL
\And
Adam Gosztolai\thanks{Correspondence:
\texttt{adam.gosztolai@meduniwien.ac.at}} \\
Institute of Artificial Intelligence \\
Medical University of Vienna
}
\begin{document}

\maketitle

\begin{abstract}
    Deep generative models often face a fundamental tradeoff: high sample quality can come at the cost of memorisation, where the model reproduces training data rather than generalising across the underlying data geometry. We introduce Carré du champ flow matching (CDC-FM), a generalisation of flow matching (FM), that improves the quality-generalisation tradeoff by regularising the probability path with a geometry-aware noise. Our method replaces the homogeneous, isotropic noise in FM with a spatially varying, anisotropic Gaussian noise whose covariance captures the local geometry of the latent data manifold. We prove that this geometric noise can be optimally estimated from the data and is scalable to large data. Further, we provide an extensive experimental evaluation on diverse datasets (synthetic manifolds, point clouds, single-cell genomics, animal motion capture, and images) as well as various neural network architectures (MLPs, CNNs, and transformers). We demonstrate that CDC-FM consistently offers a better quality-generalisation tradeoff even when used as a latent space generation model. We observe significant improvements over standard FM in data-scarce regimes and in highly non-uniformly sampled datasets, which are often encountered in AI for science applications. Our work provides a mathematical framework for studying the interplay between data geometry, generalisation and memorisation in generative models, as well as a robust and scalable algorithm that can be readily integrated into existing flow matching pipelines.
\end{abstract}

\section{Introduction}

Deep generative models aim to sample from an unknown probability density $\nu(x)$, given finitely many training points $\train = \{x^{(i)}\}^N_{i=1}\subset\R^d$. 
Prominent paradigms include variational autoencoders \citep{Kingma2014Auto-encodingBayes}, generative adversarial networks \citep{goodfellow2014generative}, diffusion processes \citep{sohl2015deep, ho2020denoising}, and methods based on continuous normalising flows \citep{chen_neural_2019, song_generative_2020, lipman2023flow, albergo_stochastic_2023} (CNFs).
Among these, CNFs have had striking recent success across domains from image generation to molecule design and weather prediction, owing to their ability to generate high-quality samples. 
However, a trivial and undesirable way of achieving high quality is through reproducing training points or close variants, known as \textit{memorisation}.
Thus, in addition to high quality, another desirable property of generative models is their ability to generate novel examples, known as \textit{generalisation}.

Recently, there has been an increasing concern that the observed generative quality has been partly achieved through memorisation
\citep{somepalli_diffusion_2022, Skrinjar}, but possibly on large scales \citep{Graber2025}, undermining novelty, diversity, and data privacy. Recent works have shown that, geometrically, memorisation coincides with the sudden drop or complete vanishing of intrinsic dimensionality of the data manifold \citep{achilli_losing_2024, ross2025a, ventura_manifolds_2025}. In other words, during memorisation, the learned distribution degenerates towards an empirical measure supported on isolated training points, rather than a smooth, finite-dimensional manifold.
This observation suggests that a way to address memorisation is by stabilising the intrinsic dimensionality and preserving non-degenerate tangent spaces.

This paper reports an advance on mitigating the quality-generalisation tradeoff in flow matching (FM, \cite{lipman2023flow}), a unifying framework of CNFs, that models a deterministic probability path $p_t(x)$ between a source density at $t=0$ (often Gaussian), and a target density of arbitrary complexity at $t=1$, and subsumes the probability paths modelled by other generative models, such as diffusion processes and score matching \citep{lipman2023flow}. 
The standard, widely used FM construction induces, near $t=1$, a \textit{homogeneous and isotropic} Gaussian kernel approximation that concentrates around each training point. 
In practice, most implementations consider a small-bandwidth limit to maximise accuracy, thereby relying on architecture and training loss for regularisation. 
Empirically, we find that the quality-generalisation tradeoff defines a frontier for FM: while models stopped early during training typically generalise well but yield subpar sample quality, training longer improves sample quality at the cost of memorisation and, consequently, reduced generalisation. 
This result remained consistent across datasets and for different neural network architectures (MLP, UNet, Transformer).
We further demonstrate that the quality-generalisation tradeoff does not simply depend on the dataset size, but on a balance between local geometry and data sparsity, indicating that memorisation can also occur in large-scale datasets.

To improve the quality-generalisation tradeoff, we introduce Carr\'{e} du champ flow matching (CDC-FM), which explicitly regularises the FM probability paths through an \textit{anisotropic and inhomogeneous} diffusion term that, for $p_0=\N(0,\I)$, yields the conditional probability path (see Appendix \ref{sec:any-dist} for the general case for arbitrary initial density $p_0$)
\begin{equation} \label{eq:cdc_conditional_path}
    p_t(x| x_1)
=\mathcal{N}\Big(x;\; t\,x_1,\ \big[(1-t)\,\mathbf{I}+t\,\G(x_1)^{\frac12}\big]^2\Big).
\end{equation}
The matrix field $\G$ controls the local Dirichlet (carré du champ) energy, and can be efficiently and robustly estimated from data using diffusion geometry \citep{jones2024diffusion,jones2024manifold,jones2026computing}, providing explicit geometric noise regularisation aligned with the data manifold.
We demonstrate that across diverse synthetic and real-world datasets and neural network architectures, CDC-FM achieves comparable or better quality than FM, preserving the fine-grained details of the data, while substantially reducing memorisation and increasing generalisation.
Our work provides a theoretical framework for the geometry-aware regularisation of flow-based generative models and a practical method that can be readily used in existing FM pipelines.

\section{Background}

In this paper, we are interested in modelling a set of data points in $\R^d$ whose density $\nu(x)$ concentrates around an \textit{unknown} lower-dimensional manifold.
We begin by revisiting the standard FM formulation \citep{lipman2023flow}. We highlight that the FM probability path $p_t(x)$ induces at $t=1$ a homogeneous, isotropic Gaussian kernel approximation of $\nu$.
This will motivate our generalised framework in Section \ref{section:results}, where we incorporate an anisotropic and inhomogeneous kernel approximation to strike a balance between faithfully modelling the data, substantially reducing memorisation, and improving generalisation. 
To illustrate the distinction between the two frameworks, we use a toy problem of learning to generate a circle from equidistant training points (Fig. \ref{figure_1}).

\begin{figure}
    \centering
    \includegraphics[width=\linewidth]{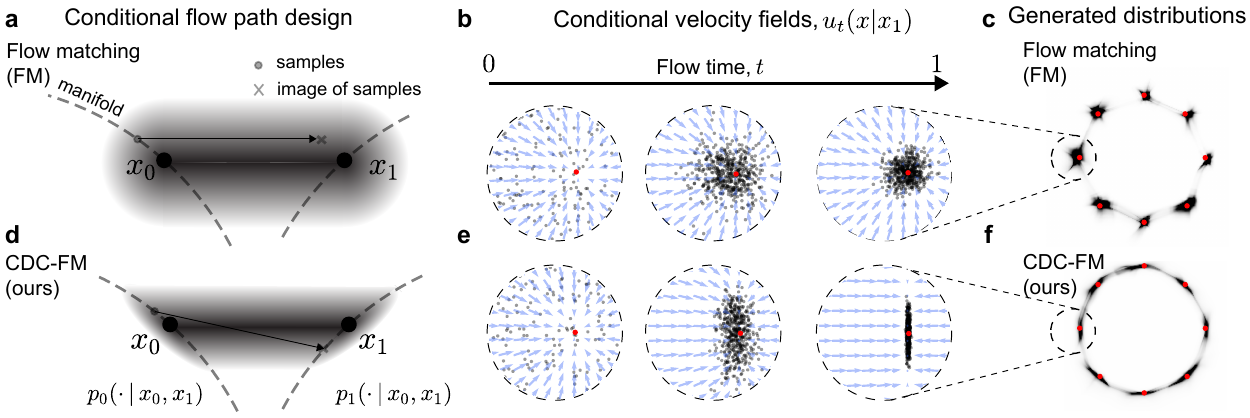}
    \caption{\textbf{Carr\'{e} du champ flow matching.} 
    \textbf{a} FM conditional path design is oblivious to the manifold structure, which can result in off-manifold samples, shown by the black arrows ($\sigma_{\min} >0$).
    \textbf{b} Conditional velocity fields (blue arrows) in FM transport mass to training points.
    \textbf{c} Generated density by FM trained on eight samples of a unit circle ($\sigma_{\min}=0$). FM memorises, concentrating likelihood around training points. 
    \textbf{d} CDC-FM conditional probability paths are the displacement (optimal transport) interpolants between local covariances and are thus aligned with the geometry.
    \textbf{e} CDC-FM conditional velocity fields flow perpendicular to the manifold.
    \textbf{f} CDC-FM regularises along the manifold, mitigating memorisation and facilitating generalisation.}
    \vspace{-1.5em} % adjust vertical space below
    \label{figure_1}
\end{figure}
 
\subsection{Flow Matching}

FM learns a velocity field $u_t(x):[0,1]\times \R^d \to \R^d$ (vanishing at the domain boundaries) that generates the probability path $p_t(x)$, satisfying the continuity equation
\begin{equation}\label{eq:continuity}
    \frac{\partial}{\partial t} p_t(x) = - \nabla \cdot (u_t(x)p_t(x)), \qquad p_0(x) = \mu(x),\,p_1(x) = \nu(x).
\end{equation}
We consider $\mu(x)=\N(x;0,\I)$ for simplicity, and generalise it to arbitrary $\mu$ in Appendix \ref{sec:any-dist}. 
Conceptually, FM takes a bottom-up approach by designing the flow path of particles $X_t:=\psi_t(X_0)\sim p_t$. 
The velocity field $u_t$ is related to $\psi_t(X_0)$ by the characteristic ODE
\begin{equation}\label{eq:compatibility}
    \frac{d}{dt}\psi_t(x)=u_t(\psi_t(x)),\qquad \psi_0(x)=x.
\end{equation}
The velocity $u_t$, in turn, induces $p_t(x)$, the probability of finding a particle at $x$ at time $t$, via (\ref{eq:continuity}). 
Specifically, FM takes the final particle position $X_1 \sim \nu$ as an auxiliary variable (a training point) and specifies the conditional flow $\psi_t(X|X_1)$. 
Although the choice of $\psi_t(X|X_1)$ has implications for the regularity of the density $\nu(x)$ \citep{albergo_stochastic_2023}, the standard choice is the affine flow
\begin{equation}\label{eq:std_affine_flow}
    %\psi_t(X|X_1) = tX_1 +  Z, \qquad Z\sim\N(0, \sigma_t \I), \qquad \sigma_t:=(1-t) +t\sigma_{\min}.
    \psi_t(X|X_1) = tX_1 +  \sigma_tX,\qquad \sigma_t:=(1-t) +t\sigma_{\min}.
\end{equation}
This choice of flow induces a mapping between Gaussian distributions via the probability path
\begin{equation}\label{eq:ot_path}
   p_t(x|x_1) = \mathcal N\left(x\mid tx_1, \sigma_t^2\I\right),
\end{equation}
which linearly interpolates the position and convolves with an isotropic Gaussian, and therefore is oblivious to the manifold geometry (Fig. \ref{figure_1}a).
Given the conditional probability path, the marginal velocity field $u_t(x)$ is learnt via a neural network $\hat{u}^\theta_t(x)$ with weights $\theta$ by minimising the loss
\begin{equation}\label{eq:fm_loss}
    \Ls(\theta) = \E_{t,X_1, X}|| \hat{u}^\theta_t(X) - u_t(X|X_1) ||^2 = \E_{t,X_0,X_1}|| \hat{u}^\theta_t(\psi_t(X_0|X_1)) - \frac{d}{dt}\psi_t(X_0|X_1) ||^2,
\end{equation}
where $t\sim \mathcal{U}[0,1],X_0\sim \N(0,\I),\,X_1\sim \nu$, and $X\sim p_t(x|x_1)$. 
After training, $\hat{u}^\theta_t(x)$ will approximate $u_t(x)$ (Theorem 2, \cite{lipman2023flow}), generating the marginal probability path $p_t$ via (\ref{eq:continuity}).
Note that, in the second equality, we used (\ref{eq:compatibility}-\ref{eq:std_affine_flow}) to specify closed-form expressions for the conditional target velocity to lead directly to training points.
Our numerical experiments corroborate this, showing that as $t\to 1$ the learnt fields concentrate mass around training points (Fig. \ref{figure_1}b).

\paragraph{Flow matching risks memorisation of training data.}
FM’s flexibility lives in the transport, i.e., the learnt velocity field $\hat{u}^\theta_t(x)$. 
Yet, in the limit $t\to 1$, it produces a fixed-bandwidth, isotropic approximation of $\nu$. 
Indeed, marginalising (\ref{eq:ot_path}) over the target distribution $\nu(x_1)$ yields the mixture:
\begin{equation}\label{eq:marginal_path}
\nu\simeq p_1(x)=\int_{\R^d} \N\left(x\middle|x_1,\sigma_{\min}^2 \I\right)\nu(x_1)dx_1\simeq \frac1N\sum_{i=1}^N \N\left(x\middle|x^{(i)},\sigma_{\min}^2 \I\right).
\end{equation}
Thus, in the limit $t\to1,\,\sigma_{\min}\downarrow 0$, the probability path converges to the empirical density.
While setting $\sigma_{\min}\!>\!0$ can achieve a fixed-bandwidth regularisation, in practice, it is common to take $\sigma_{\min} = 0$, risking memorisation (Fig. \ref{figure_1}c) as shown by simulations of our toy model.

\section{Regularised Flow matching on generalised data manifolds} \label{section:results}

\subsection{Diffusive regularisation: Carr\'{e} du champ Flow Matching (CDC-FM)}

Motivated by the fact that the data density is concentrated around the manifold, we introduce a principled regularisation into FM by replacing the conditional flow path (\ref{eq:std_affine_flow}) with
\begin{equation}\label{eq:CDC-FM_flowpath}
    \psi^\Gamma_t(X|X_1)=tX_1 + \Si_t^\Gamma(X_1)^{\frac{1}{2}}X, \qquad \Si^\Gamma_t(x) = \big[(1-t)\I + t\G(x)^{1/2}\big]^2,
\end{equation}
where $\G(x)$ is a local anisotropic covariance around $x$.
We have chosen this flow path because it replaces the homogeneous, isotropic covariance in the conditional probability path (\ref{eq:ot_path}) by 
\begin{equation}\label{eq:CDC_prob_path}
    p_t(x | x_1) = \N(x\mid tx_1,\Si_t^\Gamma(x_1)),
\end{equation}
the displacement (optimal transport) interpolant between $\mu=\N(0,\I)$ and an anisotropic Gaussian centred at $x_1\sim \nu$ that is geometrically aligned with the data manifold. In Appendix \ref{sec:any-dist} (Proposition \ref{arbitrarymunu}), we also derive the conditional probability paths valid for general $\mu,\nu$, illustrated in Fig. \ref{figure_1}d.

Note that as an alternative to (\ref{eq:CDC-FM_flowpath}) one could consider the naïve data augmentation that replaces the training points by perturbed samples, $x^{(i)}\mapsto \N(x^{(i)},\,\G(x^{(i)}))$, and then uses FM flow paths (\ref{eq:ot_path}). 
However, as we prove, this approach generates strictly suboptimal paths (Appendix \ref{section:dataaugmentation}, Theorem \ref{naiveaugmentation}). 

Armed with the new conditional path (\ref{eq:CDC_prob_path}), we may directly use the FM loss (\ref{eq:fm_loss}) to obtain the regularised velocity field. Compare Algorithms \ref{alg:flow-matching} and \ref{alg:flow-matching-modified} in Appendix \ref{app:pseudocode} for a condensed summary. A simple approximation shows that (\ref{eq:CDC-FM_flowpath}) provides an inductive bias for the learnt velocity field to approximate the data manifold.
To see this, we substitute our flow path (\ref{eq:CDC-FM_flowpath}) into the FM loss (\ref{eq:fm_loss}), noting that the target velocity is $\tfrac{d}{dt}\psi_t(X_0|X_1=x_1) = x_1 + \big[  \G(x_1)^{1/2} - \I\big] X_0$.
The random part (second term) can be approximated to leading order as $\big[  \G(x_1)^{1/2} - \I\big] X_0\sim \N(0, \I - 2\G(x_1)^{1/2} + \G(x_1))
\simeq \N(0, \I - \G(x_1))$.
This means that if $\G(x_1)$ approximates the projection map onto the local tangent space, the dominant contributions to the velocity are approximately perpendicular (Fig. \ref{figure_1}e), minimising tangential flows (Fig. \ref{figure_1}c), which are associated with memorisation \citep{achilli_losing_2024}.

To understand the mechanism behind the geometric regularisation induced by the flow path (\ref{eq:CDC-FM_flowpath}), we may first focus on the target distribution $\nu(x_1)$. As before, we marginalise (\ref{eq:CDC_prob_path}) to obtain
\begin{equation}\label{eq:CDCapprox}
\nu\simeq p_1(x)=\frac1N\sum_{i=1}^N \N\left(x\middle|x^{(i)};\,\G(x^{(i)})\right).
\end{equation}
We see that our flow path replaced the FM approximation of the data manifold $\nu$ in (\ref{eq:marginal_path}) with an anisotropic Gaussian mixture. 
As $\G(x_1)$ approximates the projection map onto the local tangent space, we numerically observe that CDC-FM faithfully learns the data manifold (Fig. \ref{figure_1}f) in contrast to FM (Fig. \ref{figure_1}c). 

To obtain a deeper understanding of the regularisation mechanism, one may study marginal probability paths $p_t$. We prove (see Appendix \ref{section:aniso_diff}, Proposition \ref{FPequationflowpath}) that our flow path (\ref{eq:CDC-FM_flowpath}) induces a geometry-aware anisotropic diffusion term to the continuity equation (\ref{eq:continuity}) to obtain a drift-diffusion process (\ref{eq:diffusion}).
For diffusion processes, one may use the Dirichlet energy (\ref{eq:dirichlet}) to quantify the smoothing introduced by the diffusion term arising from our flow paths. 
However, the Dirichlet energy is just the CDC field $\G(x^{(i)})$, measuring the local smoothness around a training point $x^{(i)}$, integrated over the data manifold. This justifies the construction of our flow path.

\subsection{Estimating the carr\'{e} du champ}\label{section:CDC}

To compute $\G$ that optimally captures the local geometry, we follow \cite{jones2024diffusion, jones2024manifold, jones2026computing} and provide a local kernel density estimate using the diffusion maps Laplacian \citep{COIFMAN20065, berry2016variable}. We compute a variable-bandwidth Gaussian kernel up to the $k$th neighbour of node $i$, $w_\epsilon(x^{(i)},x^{(j)}) = \exp\!\left(-\|x^{(i)}-x^{(j)}\|^2/(\epsilon_i\epsilon_j)\right)$, where  $\epsilon_i,\epsilon_j$ is the distance to the $k_{bw}$th nearest neighbour of $x^{(i)},x^{(j)}$, respectively.
We use this to obtain a local estimate of the transition probabilities of a Markov process generating the data:
\begin{equation}\label{eq:markov_kernel}
    (Pf)(x^{(i)}) := \sum_j P_{ij}\,f(x^{(j)}) \;=\; \E_{Y\sim P_i}\!\big[f(Y)\big],
\qquad P_{ij} := \frac{w_\epsilon(x^{(i)},x^{(j)})}{\sum_\ell w_\epsilon(x^{(i)},x^{(\ell)})}.
\end{equation}
where $f$ is a well-behaved test function and $P_{ij}$ represents the one-step transition probabilities from sample $x^{(i)}$ to a neighbour $x^{(j)}$.
Using the local Markov kernel estimates (\ref{eq:markov_kernel}), we compute
\begin{align}\label{eq:optimal_covariance}
    \G(x^{(i)}) 
    &= \E_{X\sim P_i}\left[ \left(X - m^*(x^{(i)})\right)\left(X - m^*(x^{(i)})\right)^T \right],
\end{align}
which is the local covariance of the random variable $X\sim P_i$ \citep{bakry2014analysis}. 
We prove in Appendix \ref{section:cdc_optimality} (Theorem \ref{optimalGM}) that (\ref{eq:optimal_covariance}) is the optimal Gaussian covariance at $x^{(i)}$ given the Markov kernel (\ref{eq:markov_kernel}). 
In practice, we downscale $\G(x^{(i)})$ to ensure that the added Gaussians (\ref{eq:CDCapprox}) add only a small first-order correction to the FM path and do not distort the training distribution (Appendix \ref{section:rescaling}).
We then take the rank-$d_{cdc}$ approximation of $\G$, optimising $d_{cdc}$ using grid search.
To globally scale the effect of regularisation, we also introduce a hyperparameter $\gamma$ multiplying $\G$. 

\paragraph{Computational complexity.} Our algorithm is scalable to large datasets. The additional component compared to FM is the computation of $\G$, which has a complexity of $\mathcal{O}\!\left(N\log(N)\right)$ and a memory requirement of $\mathcal{O}(N)$ with respect to the training set size (Appendix \ref{section:complexity}).
Further, in our experiments below, we also report the number of function evaluations (NFE) required for the adaptive ODE solver to reach a prespecified tolerance during inference. 
We find that CDC-FM has comparable or lower inference-time complexity than FM.

\section{Experiments} \label{section:experiments}

We now present a series of experiments to quantify the advantages of geometric regularisation over FM, particularly concerning: (i) low \textit{memorisation of training data}; (ii) good \textit{generalisation to test data}; and (iii) high sample \textit{quality}. 
We quantify (i) by marking a generated sample $y$ from the model as memorised if its nearest-neighbour ratio \citep{yoon2023diffusion}, $M(y):= ||y-x^{(1)}||/||y-x^{(2)}||$, with $x^{(1)}$ and $x^{(2)}$ the first and second nearest training neighbours of $y$, falls below a cutoff. 
To allow analysis at the level of training points, we compute the percentage of memorised samples per nearest training point and average for a global memorisation measure. 
To measure (ii), we use the negative log-likelihood (NLL) of a test dataset, which is equivalent to the cross-entropy loss between the data and the model predictions.
While both (i) and (ii) indirectly measure quality (iii), we, in addition, use the distance-to-manifold (DtM) when possible. 
For details on network architectures and hyperparameters, see Appendix \ref{section:experiment_details}.

\subsection{Improved representation of geometric datasets}\label{section:geometric_data}

Given that $\G$ approximates tangent spaces, we expected our method to be well-suited for data with strong geometric structure.

\paragraph{Three-dimensional geometry inference.}

\begin{wrapfigure}{r}{0.5\textwidth}
    \centering
    \vspace{-0.5em}
    \includegraphics[
        width=0.5\textwidth,
    ]{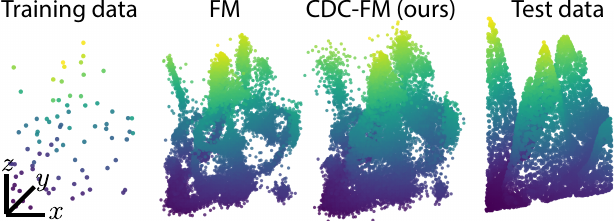}
    \caption{\textbf{Visual comparison of FM vs CDC-FM for LiDAR data.}}
    \vspace{-1em} % adjust vertical space below
    \label{figure_2}
\end{wrapfigure}

Light Detection and Ranging (LiDAR) scans provide point clouds of complex two-dimensional (2D) surfaces embedded in 3D space from limited samples.
We consider topographic LiDAR data from Mt. Rainier, WA, previously used in geometric applications of FM \citep{liu2024gsbm, kapusniak2024metric}.
We trained FM and CDC-FM on 40-200 uniformly sampled points, with velocity fields parameterised by multilayer perceptrons.

Early in training, both FM and CDC-FM samples covered the target manifold, achieving low memorisation and good generalisation but poor geometric fidelity (DtM, Table \ref{tab: lidar 4k}). 
As training progressed, their behaviour diverged. FM had consistently higher quality than CDC-FM, but achieved this by memorising training points, at the expense of generalisation. In contrast, CDC-FM improved quality with substantially better generalisation (Table \ref{tab: lidar 16k}). 
Qualitatively, FM terrain reconstructions appeared patchy and disconnected, while those of CDC-FM were smoother and coherent (Fig. \ref{figure_2}). 

\paragraph{Inference of single-cell gene expression trajectories.}\label{section:singlecell}

\begin{wraptable}{l}{0.4\linewidth}
    \centering
    \captionof{table}{\textbf{Comparison of FM and CDC-FM on single-cell data.} 
    Earth mover distance between predicted and held-out snapshots, mean over 5 runs. 
    }
    \vspace{-0.7em} 
    {\scriptsize
    \begin{tabular}{lcc}
    \toprule
    \textbf{Method}  & \textbf{Cite} $\downarrow$ & \textbf{Multi} $\downarrow$ \\ \midrule
    I-FM              & 48.276 \(\pm\) 3.281 & 57.262 \(\pm\) 3.855 \\
    I-CDC-FM          & 46.657 \(\pm\) 3.412 & 54.419 \(\pm\) 0.629 \\
    \midrule
    OT-FM             & 45.393 \(\pm\) 0.416 & 54.814 \(\pm\) 5.858 \\
    OT-CDC-FM         & 44.410 \(\pm\) 0.993 & 52.043 \(\pm\) 1.948 \\
    \bottomrule
    \end{tabular}
    }
    \vspace{-1.5 em} 
    \label{tab:single_cell}
\end{wraptable}

We evaluated the impact of geometric regularisation on two single-cell gene expression benchmarks (CITE-seq -- \textit{'Cite'} and Multiomics -- \textit{'Multi'}) from \cite{lance22a}.
These datasets comprise temporal snapshots of low-dimensional, spatially complex trajectories in a high-dimensional space, where points define the gene expression state of cells.
Interpolation between snapshots is challenging because cell sampling methods are destructive, meaning cells between time points are unpaired.

Following \cite{kapusniak2024metric}, we used PCA to reduce dimensionality to $100$-dimensions, and performed leave-one-out interpolation by assessing the models' ability to reconstruct from a total of four snapshots one of the two intermediate snapshots.
We modelled the velocity field using an MLP and trained FM and CDC-FM until the validation loss plateaued.
We found that CDC-FM resulted in consistently better reconstruction than FM, both when samples from snapshots were unpaired (I-FM) or paired using optimal transport (\cite{tong_improving_2024}, OT-FM, Table \ref{tab:single_cell}). 

These experiments demonstrate that CDC-FM can improve the quality-generalisation tradeoff in domains with underlying geometric structure.

\subsection{Quality-generalisation tradeoff for spatially heterogeneous data}

Choosing the number of training epochs trades off sample quality against generalisation and memorisation.
We hypothesised that for spatially heterogeneous data, different regions converge at different rates: at any fixed epoch, FM may memorise sparse regions and generalise in dense ones.

\begin{figure}[t]
    \centering
    \includegraphics[width=\textwidth]{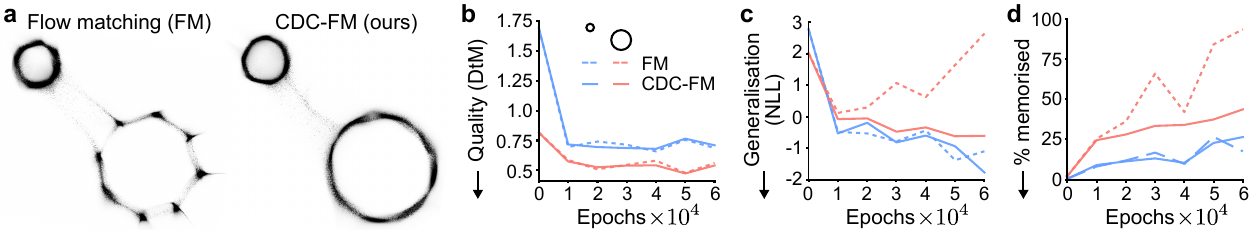}
    \caption{\textbf{Early stopping for spatially heterogeneous data.}
    \textbf{a} Samples from FM and CDC-FM trained on the two-circles dataset at an epoch late in the training (40k), when FM captures the small circle and memorises samples on the larger one.
    \textbf{b} Quality,
    \textbf{c} generalisation, and
    \textbf{d} memorisation against training epoch for the two methods, presented separately for the two circles. Lines represent means over samples.
    }
    \vspace{-1.5em} % adjust vertical space below
    \label{figure_3}
\end{figure}

\paragraph{Early stopping for spatially heterogeneous data.} 
To illustrate this, we trained FM and CDC-FM on two circles with different diameters, each with eight training points.
We observed two training phases.
In the initial phase ($\lesssim$$10^4$ epochs), both methods broadly covered both circles (Fig. \ref{figure_s1}a) with low quality (Euclidean DtM; Fig. \ref{figure_3}b), corroborating our earlier findings.
In the second phase ($\gtrsim$$10^4$ epochs), sample quality improved rapidly (Fig. \ref{figure_3}a,b). 
For FM, this improvement on the sparser, larger circle came via collapse onto training points (Fig. \ref{figure_3}a), reflecting the loss of generalisation (Fig. \ref{figure_3}c) and memorisation (Fig. \ref{figure_3}d).  
By contrast, CDC-FM quality increased without over-representation of training points, yielding markedly less memorisation (Fig. \ref{figure_3}d), stable generalisation (Fig. \ref{figure_3}c) and better inference-time numerical efficiency (Fig.~\ref{figure_s1}b) across both circles.

Overall, this example illustrates that there is no single optimal early-stopping point for FM. At any epoch, FM tends to produce a mix of high-quality yet memorised samples and novel yet low-quality samples. 
CDC‑FM is substantially less sensitive to this heterogeneity, allowing training to proceed until the desired quality is reached without incurring memorisation.

\paragraph{Limiting spatially localised memorisation in animal motion capture data.}
To reinforce our results on the two-circles data, we considered animal motion capture data of the fruit fly, \textit{Drosophila melanogaster} \citep{deangelis}. 
Points in this data are 31-frame pose sequences where each snapshot represents the 2D position of the six legs relative to the body centroid  (Fig. \ref{figure_4}a, inset), yielding a 372-dimensional ($2\times 6 \times 31$) state space. 
A 3D UMAP embedding of this data permits the visualisation of a data manifold, which is parametrised by the walking speed on the longitudinal axis and phase on the cyclic coordinate (Fig. \ref{figure_4}a). 
We also performed a sensitivity analysis for the parameters $k_{bw}$ (bandwidth) and $k$ (number of nearest neighbours), finding that only $k$ had measurable influence on the estimation of the CDC field (Fig. \ref{figure_s5}). We obtained better generalisation and lower memorisation than FM for orders of magnitude changes of $k$. Nevertheless, optimal regularisation can be achieved for $k$ values large enough to provide an accurate kernel density estimate, but small enough to avoid short-cut connections in the manifold (Fig. \ref{figure_s5}).

\begin{figure}[t]
    \centering
    \includegraphics[width=\textwidth]{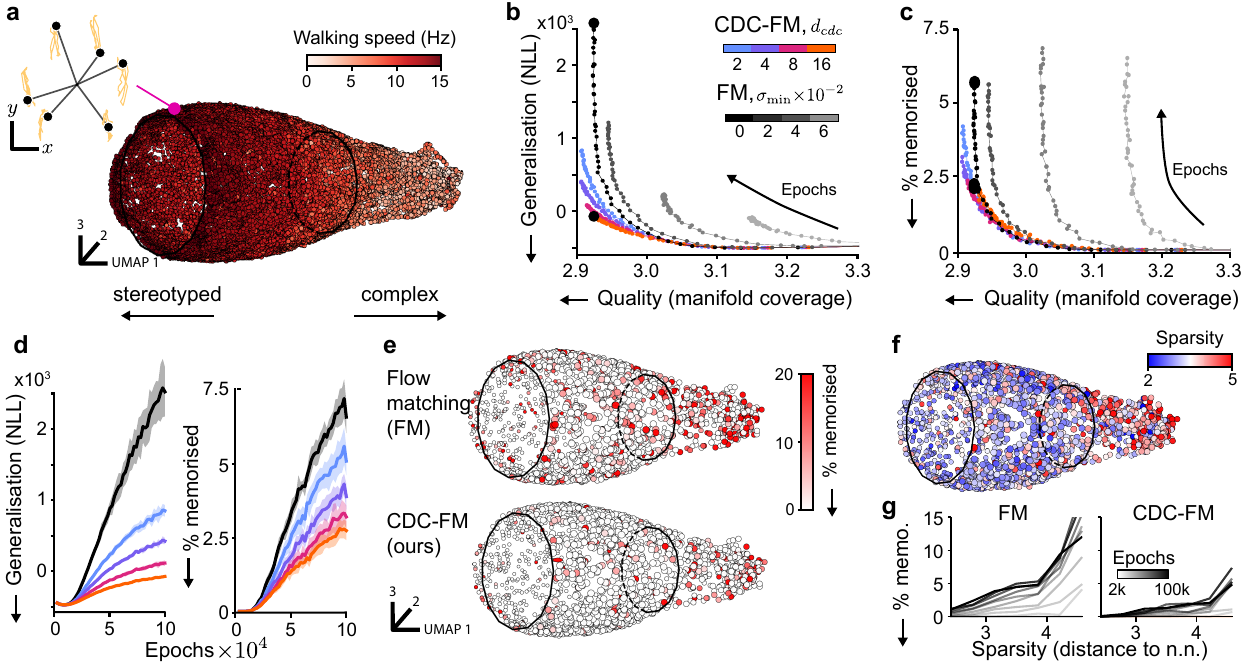}
    \caption{\textbf{Quality-generalisation tradeoff for animal motion capture data.}
    \textbf{a} Inset: an example fruit fly pose sequence. Point cloud, each point representing a 31-frame pose sequence, visualised in 3D UMAP coordinates. Shading indicates walking speed.
    \textbf{b} Generalisation against quality for CDC-FM for different $\G$ ranks, $d_{cdc}$, and for FM for different $\sigma_{\min}$. Points are averaged over eight seeds. Black circles indicate epochs analysed in e.
    \textbf{c} Same as b, but for memorisation.
    \textbf{d} Generalisation and memorisation against epochs. Shaded error bars show one std from the mean.
    \textbf{e} Percentage of memorised samples nearest to a training point (FM: $\sigma_{\min}\!=\!0$, CDC-FM: $d_{cdc}=16$).
    \textbf{f} Variation of train data sparsity. 
    \textbf{g} Average memorisation against sparsity for different epochs.
    }
    \vspace{-1.8em} % adjust vertical space below
    \label{figure_4}
\end{figure}

Using the pose sequences as training points, we trained transformers, a leading architecture for character motion synthesis \citep{hu2023motionflowmatchinghuman}, to approximate the generative velocity field of CDC-FM and FM (with different $\sigma_{\min}$).
We measured quality by the faithfulness with which samples covered the manifold, quantified by the mean distance of test points to the nearest samples.
We found that FM with $\sigma_{\min}\!=\!0$ traced out a frontier at increasing training epochs, trading off sample quality with generalisation (Fig. \ref{figure_4}b) and memorisation (Fig. \ref{figure_4}c). 
We used a memorisation cutoff of $M(y)\!\simeq\!0.6$, confirmed by the movement traces (Fig. \ref{figure_s3}).
Naïvely regularising FM by increasing $\sigma_{\min}$ did not exceed this frontier, achieving either worse generalisation (memorisation) or quality.
By contrast, adding CDC regularisation ($\gamma\!>\!0$) surpassed the FM frontier, simultaneously improving sample quality, generalisation and memorisation. 
While the advantage was strongest around $\gamma\!=\!0.3$ (Fig. \ref{figure_4}b,c), other values also lead to improvements (Fig. \ref{figure_s4}). 
Increasing $d_{cdc}\!=\!rank(\G)$ led to better generalisation with a slight drop in quality, possibly due to noise leakage in off-manifold directions.

Amongst the models in Fig. \ref{figure_4}b,c across different epochs, which one should one choose?
When plotted against epochs, we see that generalisation in FM monotonically deteriorates (Fig. \ref{figure_4}d) and memorisation monotonically increases.
This means that FM models require early-stopping strategies to optimise. 
By contrast, CDC-FM test-set performance plateaus and memorisation remains lower, meaning that early-stopping strategies are less relevant.
Our results show that for a given sample quality, especially when high, there is a CDC-FM model with comparable or better generalisation and memorisation than FM, and vice versa.

The data manifold is not covered uniformly with data points. Rather, fast behaviours are generally more stereotypical and most densely represented (left side of manifold) than slow, complex movements (right side of manifold, Fig. \ref{figure_4}a).
We expected that this sparsity pattern would lead to varying trade-offs in quality, generalisation and memorisation.
In visualising these patterns, we took a late epoch (100k) for both FM and CDC-FM (Fig. \ref{figure_4}b,c, black dots), finding that memorisation predominantly occurred in sparse regions (Fig. \ref{figure_4}e).
Memorisation correlated with the sparsity of the training points (Fig. \ref{figure_4}f,g), estimated based on the distance to the closest neighbour, and occurred first at the sparsest points (Fig. \ref{figure_4}g), corroborating our earlier findings (Fig. \ref{figure_3}). 
By contrast, CDC-FM showed substantially lower (Fig. \ref{figure_4}f,g) and less sparsity-dependent memorisation (Fig. \ref{figure_4}g). 
Overall, these results indicate that geometry-aware regularisation is especially valuable when data varies heterogeneously over the data manifold. In the next section, we examine its limits as the intrinsic dimension and dataset size increase.

\subsection{Limits of explicit geometric regularisation}

Until now, we have demonstrated that CDC-FM achieves better quality-generalisation tradeoff for different architectures (MLP, UNet, Transformers) and diverse domains (LiDAR, single-cell, motion capture). We now test the limits of CDC-FM on two axes, which are known to challenge manifold methods: (i) increasing dimensionality, (ii) scaling to large data. 

\paragraph{Influence of manifold dimension.} 
To study the influence of dimension, we generate tori $ T^d = (S^1)^d \subset \R^{2d}$ varying the dimensions $d$ and the number of training points. As the dimension increases, the effective data density decreases exponentially (curse of dimensionality), and we expected the performance of generative models to decrease. 
We present the results for FM and CDC-FM trained to 4k epochs, but found that qualitatively similar results hold for different epochs.

As before, we found that FM samples were of consistently high quality, and while CDC-FM could attain a comparable quality, it required an increasingly larger number of training points as the dimension increased (Fig. \ref{figure_5}a). 
Yet, strikingly, this increased quality by FM was a result of memorising almost all of the dataset when the dimension was high enough (Fig. \ref{figure_5}b). This shows that memorisation is more likely to occur for higher dimensions for a fixed data size.
By contrast, while memorisation in FM and CDC-FM was comparable in dimensions one or two, CDC-FM memorisation decreased with dimension and remained low (Fig. \ref{figure_5}b).
CDC-FM also attained generally higher generalisation (Fig. \ref{figure_5}c).
Taken together, these experiments show that, unlike FM, CDC-FM prevents memorisation and facilitates generalisation irrespective of dimension, an effect that remained robust when adding small Gaussian noise to the data (Fig. \ref{figure_s2}).
Yet, as the dimension increases, the desirable sample quality may only be achievable with a sufficient amount of data, likely due to the $k$nn graph construction of the kernel.

\begin{figure}
    \centering
    \includegraphics[width=\textwidth]{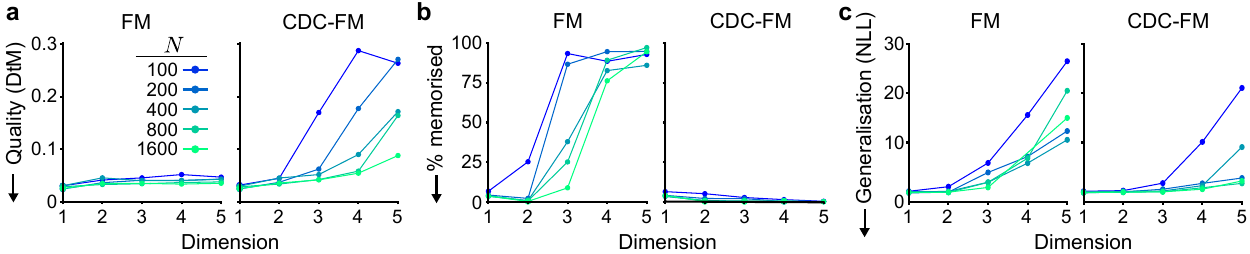}
    \caption{\textbf{Synthetic experiment on toroidal manifold.} Effect of data dimension on 
    \textbf{a} sample quality, 
    \textbf{b} memorisation and
    \textbf{c} generalisation.
    }
    \vspace{-1em} % adjust vertical space below
    \label{figure_5}
\end{figure}

\paragraph{Addressing the curse of dimension through latent diffusion: Celeba-HQ.}
\begin{wraptable}{r}{0.45\linewidth}
    \centering
    \vspace{-1.5em}
    \captionof{table}{\textbf{FID and NLL per epoch for FM vs.\ CDC-FM in latent space.} CelebA-HQ subset of size $1000$.}
    \vspace{-0.75em}
    {\scriptsize
    \begin{tabular}{ccccc}
        \toprule
        \textbf{Epoch} & \multicolumn{2}{c}{\textbf{FID} ↓} & \multicolumn{2}{c}{\textbf{NLL} ↓} \\
        & FM & CDC-FM & FM & CDC-FM \\
        \midrule
        1000 & 15.60 & 12.72 & 6.80 & 7.18 \\
        2000 & 13.51 & 13.42 & 6.78 & 6.83 \\
        3000 & 13.56 & 10.55 & 6.80 & 6.68 \\
        4000 & 13.82 & 11.70 & 6.69 & 6.53 \\
        5000 & 13.10 & 10.85 & 6.68 & 6.48 \\
        \bottomrule
    \end{tabular}
    }
    \vspace{-1.5em}
    \label{tab:fid_nll_latent}
\end{wraptable}

We then trained FM and CDC-FM in latent space, and decoded the output back to pixel space. To test whether our findings extend to such a setting, and following the setup of \citet{dao2023flow}, we evaluate FM and CDC-FM in the latent space of the Stable Diffusion VAE \citep{rombach2022high}. 
We train both models on a subset of $1000$ Celeba-HQ images, and report FID and NLL across epochs in Table~\ref{tab:fid_nll_latent}. Despite the dimensionality reduction to size $32\times 32\times 4$, we observe that the CDCFM regularisation improves both quality and generalisation after $3$k epochs once both model performances stabilise.

\paragraph{Influence of dataset size.}
Finally, we studied the scaling of CDC-FM to large data in image synthesis, where generative models have enjoyed considerable success, although memorisation is still reported, particularly for a low number of training points \citep{Kadkhodaie2023GeneralizationRepresentations, gu_memorization_2023}. 

We trained FM on increasing subsets of CIFAR-10 for $\sigma_{\min}\!=\!0$, as we found that this choice yielded the best performance (Fig. \ref{fig:two_subfigures}), and similarly, CDC-FM, using a UNet network architecture. 
We measured sample quality by the Fr\'{e}chet inception distance (FID) on a test set of 10k samples.
We found that, for small training set sizes ($<$10k), all training points became abruptly memorised ($M\!<\!0.2$) as training progressed, consistent with a phase transition as reported in diffusion models \citep{pham2024memorization}, while the fraction of memorised points steadily increased with larger training sizes (Fig. \ref{figure_6}b). 
By comparison, we found only a few per cent of memorised points with CDC-FM (Fig. \ref{figure_6}b).
Lower memorisation was accompanied by substantially better generalisation and sample quality at a late epoch after the FM phase transition (4k, Fig. \ref{figure_6}c,d, bottom), but only comparable generalisation and quality for epochs before the phase transition (2k, Fig. \ref{figure_6}c,d, top). 
Yet, as the number of training points increased, quality and generalisation in FM and CDC-FM were comparable across epochs, indicating that implicit regularisation, from the architecture and loss function, becomes dominant. 

Our results demonstrate that the benefit of geometric noise is highest for low, heterogeneous or geometrically structured data settings.

\section{Related work}

\paragraph{Manifold hypothesis and generative modelling.} 
Our approach is motivated by geometric approaches of generative modelling under the so-called manifold hypothesis, surveyed in \cite{loaiza-ganem_deep_2024}, which presumes that high-dimensional data often have low-dimensional structure.
Generative models have also been used to estimate intrinsic geometric quantities, like dimension \citep{stanczuk_your_2023}.
Conversely, related line of work constrains generative models to predefined manifolds \citep{chen_flow_2024, huang_riemannian_2022, mathieu_riemannian_2020, bortoli_convergence_2023}, or to manifolds that are learnt from data \citep{kapusniak2024metric, peach_implicit_2024, gosztolai2025}.
Our use of the carré du champ is based on diffusion geometry \citep{jones2024diffusion}, which generalises classical Riemannian geometry to complex geometries that do not meet the strict definition of a manifold.
Rather than using the geometry as a constraint, we use it as a form of model regularisation by modifying the probability path with an anisotropic diffusive term aligned to the data geometry.

\paragraph{The quality-generalisation tradeoff in diffusion models.} 
Memorisation and generalisation have been studied in the context of diffusion models, where empirical evidence showed a strong dependence on training set size and architecture \citep{yoon2023diffusion, gu_memorization_2023}. From a geometric standpoint, \cite{ross2025a} argues that memorisation occurs when the learnt manifold’s dimension is too low due to either overfitting-driven memorisation, when tangent directions are not fully captured, or data-driven memorisation, when the underlying data manifold itself is degenerate.
In this regard, \cite{achilli_losing_2024} observed failures to capture tangent space dimensions and \cite{ventura_manifolds_2025} studied the dynamical regimes of diffusion models from a geometric perspective.
On the theoretical side, \cite{bortoli_convergence_2023} studied the convergence of denoising diffusion models under manifold assumptions.
While FM and diffusion share strong formal analogies — indeed, FM unifies score-based and diffusion training — there has been much less work on memorisation in FM. \citet{bertrand2025closed} shows that the optimal FM vector field memorises and argues that generalisation does not come from the stochasticity of the FM objective. Our results thus complement and extend the existing understanding of memorisation phenomena in generative modelling.

\paragraph{Geometric regularisation.}
Early work applied geometric regularisation to supervised learning by introducing tangent information either via hand-crafted invariances \citep{simard_tangent_1991} or tangents estimated from neural network Jacobians \citep{rifai_manifold_2011}.
CDC‑FM differs from these by applying geometric regularisation in a generative modelling setting: rather than modifying the loss, we regularise the generative path. We use a geometry–guided, anisotropic diffusion that enforces smoothness aligned with the local data geometry, and prevents collapse onto training points.

\begin{figure}[t]
    \centering
    \includegraphics[width=\textwidth]{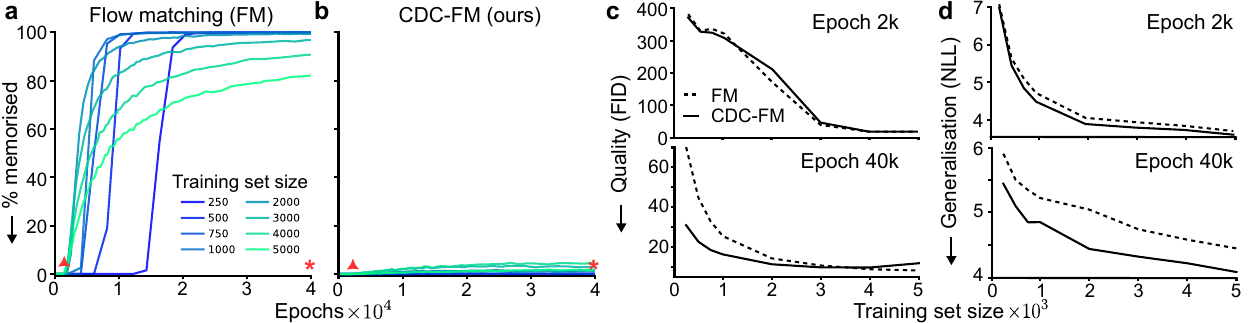}
    \vspace{-1em} % adjust vertical space below
    \caption{\textbf{CIFAR-10 in the low data regime.}
    Percentage of points memorised for 
    \textbf{a} FM and 
    \textbf{b} CDC-FM for increasing training epochs.
    \textbf{c} Quality and \textbf{d} generalisation against training set at a representative early (2k, orange triangle) and late (40k, orange star) epoch.
    }
    \vspace{-1.5em}
    \label{figure_6}
\end{figure}

\section{Discussion}

We introduced Carr\'{e} du Champ Flow Matching (CDC‑FM), a principled modification of flow matching (FM) that injects geometric regularisation into the conditional probability path.
Our generalisation can be understood as adding a spatially-modulated, geometry-aware noise to the deterministic FM generative path to minimise its Dirichlet energy.
This noise term encourages transport normal to the data manifold and suppresses tangential collapse onto training points.
We show that this noise term can be rigorously justified as the optimal transport path in the space of probabilities and can be optimally estimated from data.

Empirically, we find across synthetic and real data that sample quality and generalisation are in tradeoff. Namely, FM models require a specified number of training epochs to reach a desired sample quality, which comes at a cost of memorisation and decreased generalisation to the test set.
CDC-FM could mitigate this tradeoff by achieving higher generalisation for the desired sample quality for diverse datasets, including geometric point clouds (LiDAR) and continuous trajectories (single‑cell time courses; motion capture).
Notably, CDC-FM reduced localised memorisation that plagues FM under heterogeneous sampling densities, even for moderately large data.

Our model uses a form of the manifold hypothesis based on local heat kernel approximations. This is weaker than the typical Riemannian approximation, as it allows for the intrinsic dimension to vary and is robust to noise. Yet, we found that as the manifold dimension rises, our method needed exponentially more samples to maintain accuracy, due to the need to estimate tangent spaces from local neighbourhoods.
Further, for non-geometric data, the benefit of geometric regularisation diminished, on average, as the training set size increased, because the neural network architectures and the loss function already confer inductive biases.
Yet, we showed that architecture-induced regularisation can depend heavily on the local complexity and sparsity of the data, while the use of CDC-FM remained robust and resulted in better generalisation and, in some cases, also higher quality in diverse data settings than FM. 

Overall, we expect that even in overall high data settings, local memorisation is likely a common occurrence, driven by local sparsity patterns \citep{Skrinjar}. 
In these scenarios, we expect that CDC-FM can provide a robust tool to reduce or eliminate memorisation and improve generalisation. Thus, our approach is not a competitor to FM but a plug‑in regulariser that can be scheduled, adapted, or even learned and can be readily used in conjunction with existing latent space pipelines. 
While we used our geometric framework to generalise FM, being one of the most broadly adopted generative models, future work could explore the effect of our geometric regularisation on other generative models and stochastic regularisation strategies.
Our work thus opens a path to geometry‑aware flow-based generative models with stronger guarantees, better sample efficiency, and improved robustness to privacy risks.

\section{Acknowledgements}

This work is supported by an ERC grant (NEURO-FUSE, Project DOI: 10.3030/101163046). MB and JB are partially supported by
the EPSRC Turing AI World-Leading Research Fellowship
No. EP/X040062/1 and EPSRC AI Hub No. EP/Y028872/1.

\section{Author contributions}

The work was conceived and performed at the Institute of Artificial Intelligence of the Medical University of Vienna, except for refining results and manuscript writing. JB, IJ, PV and AG conceived the study. JB, IJ and AG developed the mathematical formalism. JB, IJ and DD performed the experiments. JB, IJ, DD and AG wrote the manuscript with comments from PV and MB.

\section{Reproducibility statement} 
All details necessary to reproduce the experiments are present in the paper. Code is available at \url{https://github.com/Dynamics-of-Neural-Systems-Lab/cdc-fm}.

\bibliographystyle{iclr2026_conference}
\bibliography{bibliography}

\newpage

\appendix

\renewcommand{\thefigure}{A\arabic{figure}}
\renewcommand{\thetable}{A\arabic{table}}

\setcounter{figure}{0}
\setcounter{table}{0}
% \section{Appendix}
\begin{center}
    \LARGE Appendices
\end{center}

\section*{Use of LLMs and other tools} We used Grammarly to facilitate writing and GPT-5, Gemini, and Claude to check for mathematical and notational inconsistencies, as well as facilitate writing code.

\section{Arbitrary sources and targets}\label{sec:any-dist}

In the main text, we described our method in the special case of generating flow paths from a Gaussian $\mu=\N(0,\I)$ to a complex density $\nu$.
Here we show that our framework can be readily extended to model flow paths between two arbitrary densities $\mu,\nu$.
In addition to the single-cell dataset in Section \ref{section:singlecell}, this setup is relevant for many other applications \citep{liu_rectified_2022, albergo_stochastic_2023}. 
FM naturally lends to this setting as the conditional paths can be conditioned on both source and target samples \citep{tong_improving_2024}. 
First, to approximate the manifold structure, we compute CDC matrices $\G(x_0), \G(x_1)$, for training points $x_0\sim \mu,x_1\sim \nu$, respectively, as described in Section~\ref{section:CDC}.
We incorporate these local tangent spaces in the conditional probability path by using the displacement interpolant between $\N(x_0, \G(x_0))$ and $\N(x_1, \G(x_1))$. 
We formalise this in the following proposition.

\begin{proposition}\label{arbitrarymunu}
    Given that $p_0(x | x_0, x_1)=\N(x_0, \G(x_0))$ and $p_1(x|x_0, x_1)=\N(x_1, \G(x_1))$ are Gaussian, the displacement interpolant between them, $p_t(x | x_0, x_1) := [p_0,\,p_1]_t$, is also Gaussian, with mean and covariance given by 
    \begin{equation}
    \begin{aligned}
    &\mu_t = (1-t)x_0 + t x_1 \\
        &\Sigma_t = \mathbf{A}_t\G(x_0)\mathbf{A}_t^\top
    \end{aligned}
    \end{equation}
    where $\mathbf{A}_t = \left(1-t\right)\I + t\mathbf{B}$ and $\mathbf{B} := \G(x_0)^{-\frac{1}{2}}\left(\G(x_0)^{\frac{1}{2}} 
    \G(x_1)\G(x_0)^{\frac{1}{2}}\right)^{\frac{1}{2}}\G(x_0)^{-\frac{1}{2}}$, with the corresponding conditional flow being 
    \begin{equation}
        \psi_t(x| x_1, x_0) = (1-t)x_0 + tx_1 + \mathbf{A}_t\left(x - x_0\right).
    \end{equation}
\end{proposition}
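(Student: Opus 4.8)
The plan is to reduce the statement to the classical closed form for quadratic-Wasserstein ($W_2$) optimal transport between Gaussians, and then verify that McCann's displacement interpolant collapses to the claimed affine flow. By Brenier's theorem specialised to Gaussian marginals, the $W_2$-optimal (Monge) map from $p_0=\N(x_0,\G(x_0))$ to $p_1=\N(x_1,\G(x_1))$ is affine, $T(x)=x_1+\mathbf{B}(x-x_0)$, with $\mathbf{B}$ the Bures ``geometric-mean'' matrix given precisely by the formula in the proposition. So the first step is merely to recall this map and identify its linear part with the stated $\mathbf{B}$.

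The second step is to check the two algebraic facts that certify $T$ as a valid transport map. Writing $\mathbf{C}:=\G(x_0)^{1/2}\G(x_1)\G(x_0)^{1/2}$, which is symmetric positive definite as a congruence of the SPD matrix $\G(x_1)$, we have $\mathbf{B}=\G(x_0)^{-1/2}\mathbf{C}^{1/2}\G(x_0)^{-1/2}$, a congruence of the symmetric matrix $\mathbf{C}^{1/2}$; hence $\mathbf{B}$ is symmetric and positive definite. Next, $\mathbf{B}$ maps the source covariance to the target one, $\mathbf{B}\,\G(x_0)\,\mathbf{B}=\G(x_1)$, which follows by cancelling the factors $\G(x_0)^{\pm1/2}$ around the central $\G(x_0)$, leaving $\G(x_0)^{-1/2}\mathbf{C}\,\G(x_0)^{-1/2}=\G(x_1)$. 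These identities guarantee $T_\#\,p_0=p_1$.

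The third step passes from the transport map to the interpolant. By McCann's interpolation the $W_2$ geodesic is $p_t=(T_t)_\#\,p_0$ with $T_t:=(1-t)\,\mathrm{id}+t\,T$. Substituting $T$ and collecting terms gives $T_t(x)=[(1-t)\I+t\mathbf{B}]x+t\,x_1-t\,\mathbf{B}\,x_0=(1-t)x_0+t\,x_1+\mathbf{A}_t(x-x_0)$ with $\mathbf{A}_t:=(1-t)\I+t\mathbf{B}$, which is exactly the claimed flow $\psi_t$. Since $T_t$ is affine and $p_0$ is Gaussian, the pushforward $p_t$ is Gaussian, and taking the mean and covariance of $T_t(X)$ for $X\sim p_0$ yields $\mu_t=(1-t)x_0+t\,x_1$ and $\Sigma_t=\mathbf{A}_t\,\G(x_0)\,\mathbf{A}_t^\top$. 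A quick endpoint check closes the argument: $\mathbf{A}_0=\I$ recovers $p_0$, while $\mathbf{A}_1=\mathbf{B}$ together with $\mathbf{B}\,\G(x_0)\,\mathbf{B}=\G(x_1)$ recovers $p_1$.

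The main obstacle is conceptual rather than computational: the only nontrivial inputs are the closed-form characterisation of the Gaussian OT map and McCann's fact that the $W_2$ geodesic is obtained by linearly interpolating the Monge map. Once these are invoked, what remains is the elementary matrix algebra above, which presents no difficulty provided $\G(x_0)$ is invertible. The one point deserving care is the degenerate case, since in practice $\G$ is taken as a rank-$d_{cdc}$ approximation; there I would restrict to the common supporting subspace (or use the Moore--Penrose pseudo-inverse in the definition of $\mathbf{B}$), and note that the interpolant and its endpoints are unaffected on the complementary directions.
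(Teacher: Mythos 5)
Your proof is correct and follows essentially the same route as the paper, which simply cites the known Bures--Wasserstein/optimal-transport interpolation between Gaussians (Bhatia et al.); you have fleshed out exactly the content of that citation, namely the affine Monge map $T(x)=x_1+\mathbf{B}(x-x_0)$, the identity $\mathbf{B}\,\G(x_0)\,\mathbf{B}=\G(x_1)$, and McCann's interpolation $T_t=(1-t)\,\mathrm{id}+tT$. Your closing remark about handling the rank-deficient case via the common supporting subspace or pseudo-inverse is a sensible addition that the paper's one-line proof does not address.
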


\begin{proof}
    Follows from the optimal transport interpolation of two covariance matrices $\G(x_0),\,\G(x_1)$ \citep{bhatia2017bureswassersteindistancepositivedefinite}.
\end{proof}

\section{Carr\'{e} du champ flow matching generates optimal transport flow paths}\label{section:dataaugmentation}

Since we aim to improve the approximation of the underlying density as $\nu(x) \simeq \frac{1}{N}\sum_{i=1}^N \N(x^{(i)}, \G(x^{(i)}))$ using the carr\'{e} du champ tangent space approximations $\G(x^{(i)})$, a naïve approach might be to randomly augment each training point $x^{(i)}$ by some perturbation from $\N(0, \G(x^{(i)}))$, take the standard conditional OT paths (\ref{eq:ot_path}), and train against the FM loss (\ref{eq:fm_loss}).
However, we show that this approach is equivalent to FM between $\N(0, \I)$ and $\N(x^{(i)}, \G(x^{(i)}))$ with non-optimal transport probability paths.

\begin{theorem}\label{naiveaugmentation}
\label{thm: augmentation is non OT}
Training an FM model with source points $X_0\sim \N(0,\I)$ and target points $Z 
\sim \N(X_1, \G(X_1))$ for $X_1 \sim \nu$ is equivalent, in expectation, to training with the conditional probability path
\begin{equation}
\label{eq:augmentation equivalent path}
\tilde{p}_t(x|x_1) = \N(tx_1,\, (1-t)^2\I + t^2\G(x_1))
\end{equation}
which differs from the optimal probability path obtained via displacement interpolation 
\begin{equation}
p_t(x | x_1)
:= [\N(0, \I),\, \N(x_1, \G(x_1))]_t = \N\big(t x_1, \big[(1-t)\I + t\G (x_1)^{1/2}\big]^2 \big)
\end{equation}
whenever $\G (x_1) \neq 0$.
\end{theorem}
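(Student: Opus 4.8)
The plan is to prove the statement in two self-contained steps: first I would identify the effective conditional path that the naïve augmentation induces (the equivalence claim), and then compare its covariance against that of the displacement interpolant to certify that the two differ whenever $\G(x_1)\neq 0$.

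For the equivalence, I would write the augmented target explicitly as $Z = x_1 + W$ with $W\sim\N(0,\G(x_1))$ drawn independently of the source $X_0\sim\N(0,\I)$, and feed $Z$ through the standard conditional OT flow (\ref{eq:std_affine_flow}) at $\sigma_{\min}=0$, so that $X_t = tZ + (1-t)X_0 = tx_1 + tW + (1-t)X_0$. Conditioned on $X_1 = x_1$, this is a sum of independent centred Gaussians, hence Gaussian with mean $tx_1$ and, by independence of $W$ and $X_0$, covariance $t^2\G(x_1) + (1-t)^2\I$; this is exactly $\tilde p_t(x\mid x_1)$. The phrase ``equivalent in expectation'' I would justify by noting that the FM objective (\ref{eq:fm_loss}) sees the augmentation only through the joint law of $(X_t, X_1)$ together with the conditional velocity target $Z - X_0$, so that the field FM learns is $u_t(x)=\E[\,Z-X_0 \mid X_t=x\,]$; taking the inner expectation over the auxiliary noise $W$ leaves this joint law, and therefore this marginal field, identical to what one obtains by training directly with the conditional path $\tilde p_t(\cdot\mid x_1)$.

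For the difference, both candidate paths share the mean $tx_1$, so it suffices to compare covariances. Using that $\I$ commutes with $\G(x_1)^{1/2}$, I would expand
\begin{equation}
\big[(1-t)\I + t\,\G(x_1)^{1/2}\big]^2 = (1-t)^2\I + 2t(1-t)\,\G(x_1)^{1/2} + t^2\,\G(x_1),
\end{equation}
and subtract the augmentation covariance $(1-t)^2\I + t^2\G(x_1)$, leaving the cross term $2t(1-t)\,\G(x_1)^{1/2}$. For $t\in(0,1)$ this vanishes if and only if $\G(x_1)^{1/2}=0$, i.e. $\G(x_1)=0$; hence whenever $\G(x_1)\neq 0$ the two covariances are distinct and the Gaussian paths differ, which is the claim.

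I expect the only delicate point to be the rigorous reading of ``equivalent in expectation'' in the first step: the Gaussian convolution itself is routine, but one must argue carefully that integrating out the augmentation variable $W$ does not change the velocity field FM actually learns, which reduces to the standard FM fact that the marginal velocity is a functional of the joint distribution of $(X_t, X_1)$ alone. By contrast, the second step is a one-line matrix identity, the whole content being the surviving linear-in-$\G(x_1)^{1/2}$ cross term that the optimal-transport interpolant carries but the convolution does not.
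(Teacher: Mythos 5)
Your proposal is correct and follows essentially the same route as the paper: both derive $\tilde p_t(x\mid x_1)=\N(tx_1,(1-t)^2\I+t^2\G(x_1))$ by pushing the augmented target through the affine flow and summing independent Gaussian covariances, justify the ``equivalent in expectation'' claim via the standard conditional-FM marginalisation fact (the paper phrases this as an application of Theorem 2 of \cite{lipman2023flow} to the sub-problem between $\N(0,\I)$ and $\N(x_1,\G(x_1))$ for each fixed $X_1$, which is the same observation as yours that the least-squares gradient depends only on the law of $X_t$ and the conditional mean of the target), and then identify the surviving cross term $2t(1-t)\,\G(x_1)^{1/2}$ as the entire discrepancy. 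No substantive differences.
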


\begin{proof}
We first show that training with conditional flow paths leading to a naïvely augmented training set leads to the correct marginal probability paths.
We start by considering the FM loss with augmented data, which results in the loss
\begin{equation}
\label{eq:CFM loss augmentation}
\Ls_{\text{aug}}(\theta) := \E_{t,X_0, X_1, Z} \|\hat{u}^\theta_t((1-t)X_0 + tZ) - (Z - X_0)\|^2
\end{equation}
for $t \sim \mathcal{U}(0,1)$, $X_0 \sim p$, $X_1 \sim \nu$, and $Z \sim \N(x_1, \G(x_1))$.
Differentiating with respect to $\theta$, we get
\begin{equation}
\nabla_{\theta} \Ls_{\text{aug}}
= \E_{X_1} \nabla_{\theta} \E_{t, X_0, Z} 
\|\hat{u}^\theta_t((1-t)X_0 + tZ) - (Z - X_0)\|^2.
\end{equation}
The expectation on the right is just the FM loss (\ref{eq:fm_loss}) for an affine flow $\widetilde{\psi}_t(X|Z) = (1-t)X + tZ$ between $\N(0, \I)$ and $\delta_Z$.
By Theorem 2 in \cite{lipman2023flow}, we may marginalise over the conditional flows $\widetilde{\psi}_t(\cdot|Z)$
\begin{equation}
\label{eq:augmentation proof lemma}
\nabla_{\theta} \E_{t, X_0, Z} \|\hat{u}^\theta_t((1-t)X_0 + tZ) - (Z - X_0)\|^2
= \nabla_{\theta} \E_{t, X_0} \|\hat{u}^\theta_t(\widetilde{\psi}_{t, X_1}(X_0)) - \frac{d}{dt}\widetilde{\psi}_{t, X_1}(X_0)\|^2
\end{equation}
where $\widetilde{\psi}_{t, X_1}$ denotes the marginal flow between $\N(0, \I)$ and $\N(x_1, \G(x_1))$.
This tells us that we can view data augmentation as a distinct FM problem for each training point $X_1$. 
We may then take the expectation over these points to obtain
\begin{align*}
\nabla_{\theta} \Ls_{\text{aug}}
&= \E_{X_1} \nabla_{\theta} \E_{t, X_0} \|\hat{u}^\theta_t(\widetilde{\psi}_{t, X_1}( X_0 )) - \widetilde{\psi}_{t, X_1}'( X_0)\|^2 \\
&= \nabla_{\theta} \E_{t, X_0, X_1} 
\|\hat{u}^\theta_t(\widetilde{\psi}_{t, X_1}( X_0 )) - \widetilde{\psi}_{t, X_1}'( X_0)\|^2.
\end{align*}
If we define conditional flows between $\N(0, \I)$ and $\N(x_1, \G(x_1))$ by
\(
\widetilde{\psi}_{t}( \cdot | X_1 ) := \widetilde{\psi}_{t, X_1}
\)
then
\begin{equation}
\nabla_{\theta} \Ls_{\text{aug}}
= \nabla_{\theta} \E_{t, X_0, X_1} 
\|\hat{u}^\theta_t(\widetilde{\psi}_{t}( X_0| X_1 )) - \widetilde{\psi}_{t}'( X_0| X_1 )\|^2,
\end{equation}
which shows that training with $\Ls_{\text{aug}}$ is the same, in expectation, as training with $\Ls$ (\ref{eq:fm_loss}) using the conditional flows $\widetilde{\psi}_{t}( \cdot| X_1 )$.

Although the above shows that naïve data augmentation is equivalent to an FM problem, we now show that the resulting conditional flow paths are different from the CDC-FM paths, and so are suboptimal.
We can derive a closed-form expression for the conditional path 
$\tilde{p}_t( \cdot | X_1)$ as
\begin{equation}
\tilde{p}_t( \cdot | X_1)
% = \widetilde{\psi}_{t}(\cdot | X_1 )_\sharp (\mu)
= \N\big(t X_1, (1-t)^2 \textbf{I} + t^2 \G(X_1) \big),
\end{equation}
because, if $X \sim \tilde{p}_t( \cdot | X_1)$, then $X = ta + (1-t)b$ where $a \sim \N(0, \I)$ and $b \sim \N(x_1, \G(x_1))$ are independently sampled, but the optimal probability path between $\N(0, \I)$ and $\N(x_1, \G(x_1))$ is given by the displacement interpolant
\begin{align*}
p_t(\cdot | X_1)
&:= [\N(0, \I),\, \N(X_1, \G(X_1))]_t \\
&= \N\big(t X_1, \big[(1-t)\I + t\G (X_1)^{1/2}\big]^2 \big) \\
&= \N\big(t X_1, (1-t)^2\I + t^2\G (X_1) + 2t(1-t)\G (X_1)^{1/2} \big),
\end{align*}
which differs from $\tilde{p}_t( \cdot | X_1)$ whenever $\G (X_1) \neq 0$.
Therefore, whereas the probability paths used in CDC-FM are displacement (optimal transport) interpolants, those in FM with data augmentation are not displacement interpolants, except for when $\G (X_1) = 0$, where they both collapse to FM.
\end{proof}

This means that, unlike the displacement interpolant paths in CDC-FM, data augmentation is not conditionally optimal.
In \cite{lipman2023flow}, the authors show that an FM model trained with deterministic displacement interpolant probability paths requires fewer solver steps to compute a solution.

\section{Pseudo code}\label{app:pseudocode}

We present pseudo code for CDC-FM in the case of noise to data and compare it to standard FM in the following algorithms.

\begin{table}[h]
  \centering
  \begin{minipage}[t]{0.48\textwidth}
    \begin{algorithm}[H]
      \caption{Flow Matching}
      \label{alg:flow-matching}
      \begin{algorithmic}[1]
        \State \textbf{Input:} Samples $\train = \{x^{(i)}\}^N_{i=1}\subset\R^d$ from $\nu(x)$ parameterised vector field \(v^\theta(x,t)\).
        \Statex % This inserts an unnumbered blank line.
        \Statex % This inserts an unnumbered blank line.
        \State \textbf{Output:} Gradient \(\nabla_\theta \mathcal{L}(\theta)\) for updating parameters \(\theta\)
        \State Sample \( x_1 \gets x^{(i)} \sim \nu(x)\)
        \State Sample \(t \sim \text{Uniform}[0,1]\)
        \State Sample \(x_0\sim \mathcal{N}\left(0, \mathbf{I}\right) \)
        \State Define
        \[x_t \gets tx_1 + \left(\left(1-t\right) + t\sigma_{\min}\right)x_0\]
        \Statex % This inserts an unnumbered blank line.
        \vspace{-4.5pt}
        \State Define the target conditional vector field at
        \[
        v_t(x_t \mid x_1) \gets x_1 - \left(1 - \sigma_{\min} \right)x_0
        \]
        \Statex % This inserts an unnumbered blank line.
        \vspace{-4.5pt}
        \State Compute the loss
        \[
        \mathcal{L}(\theta)=\bigl\|v^\theta(x_t,t)-v_t(x_t \mid x_1)\bigr\|^2
        \]
        \State Compute the gradient \(\nabla_\theta \mathcal{L}(\theta)\)
        \State \Return \(\nabla_\theta \mathcal{L}(\theta)\)
      \end{algorithmic}
    \end{algorithm}
  \end{minipage}\hfill
  \begin{minipage}[t]{0.48\textwidth}
    \begin{algorithm}[H]
      \caption{CDC Flow Matching}
      \label{alg:flow-matching-modified}
      \begin{algorithmic}[1]
        \State \textbf{Input:} Samples $\train = \{x^{(i)}\}^N_{i=1}\subset\R^d$ from $\nu(x)$ together with their local covariance square-root \(\G(x^{(i)})^{0.5}\); parameterised vector field \(v^\theta(x,t)\).
        \State \textbf{Output:} Gradient \(\nabla_\theta \mathcal{L}(\theta)\) for updating parameters \(\theta\)
        \State Sample \( x_1 \gets x^{(i)} \sim \nu(x)\)
        \State Sample \(t \sim \text{Uniform}[0,1]\)
        \State Sample \(x_0\sim \mathcal{N}\left(0, \mathbf{I}\right) \)
        \State Define
        \[x_t \gets tx_1 + \left(\left(1-t\right)\mathbf{I} + t\G(x_1)^{0.5}\right)x_0 \]
        \State Define the target conditional vector field
        \[ v_t(x_t \mid x_1) \gets x_1 - \left( \mathbf{I} - \G(x_1)^{0.5}\right) x_0
        \]
        \State Compute the loss
        \[
        \mathcal{L}(\theta)=\bigl\|v^\theta(x_t,t)-v_t(x_t\mid x_1)\bigr\|^2
        \]
        \State Compute the gradient \(\nabla_\theta \mathcal{L}(\theta)\)
        \State \Return \(\nabla_\theta \mathcal{L}(\theta)\)
      \end{algorithmic}
    \end{algorithm}
  \end{minipage}
\end{table}

\section{Interpretation of CDC-FM as an anisotropic diffusive regularisation}\label{section:aniso_diff}

Fundamentally, our framework seeks to identify the Markov process, whose density at $t=1$ matches the data density $\nu$.
The CDC-FM flow path (\ref{eq:CDC-FM_flowpath}) introduces a data-driven regularity in this Markov process.
In this section, we justify that the CDC-FM flow path is equivalent to adding a space-dependent, anisotropic diffusion term into the marginal probability path (\ref{eq:continuity}).
This leads to the Fokker–Planck equation
\begin{equation}\label{eq:diffusion}
    \frac{\partial}{\partial t} p_t(x) = (L^*p_t)(x) := - \nabla \cdot (u_t(x)\,p_t(x)) + \frac{1}{2}\sum_{q,r}\partial_q\partial_r(A_t(x)_{qr}\, p_t(x)).
\end{equation}
Here, $L^*$ is the adjoint of $L$, the infinitesimal generator of the stochastic process, and $A_t$ is the diffusion tensor, which adapts to the local geometry of data.

Note that the mean and covariance in (\ref{eq:CDC-FM_flowpath}) do not depend on $x$, only on the end-point $x_1$. 
We may therefore prove the following result.

\begin{proposition}\label{FPequationflowpath}
    Let $p_t(x)$ define a Gaussian probability path
    \begin{equation}
        p_t(x)=\N(x;\,m_t,\Sigma_t),\qquad m_t\in\R^d,\,\Sigma_t\in \mathbb{S}_{++}^d,
    \end{equation}
    where $m_t,\Sigma_t$ are differentiable and independent of $x$.
    Then, $p_t(x)$ satisfies the following Fokker-Planck equation
    \begin{equation}
        \frac{\partial}{\partial t}p_t(x) = \nabla\cdot(\dot{m}_t\,p_t(x)) + \frac{1}{2}\sum_{q,r}\partial_q\partial_r(\dot{\Sigma}_{t,q,r}\,p_t(x)).
    \end{equation}
\end{proposition}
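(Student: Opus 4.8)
The plan is to prove the identity by passing to the Fourier transform (characteristic function) in the spatial variable $x$, which converts the spatial differential operators on the right-hand side into polynomial multipliers in the frequency variable $\xi$ and reduces the whole claim to an elementary algebraic identity.

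First I would record the characteristic function of the Gaussian path,
\[
\hat p_t(\xi) \;=\; \int_{\R^d} p_t(x)\,e^{i\,\xi^\top x}\,dx \;=\; \exp\!\Big(i\,\xi^\top m_t \;-\; \tfrac12\,\xi^\top \Sigma_t\,\xi\Big),
\]
which is available in closed form precisely because $p_t=\N(x;\,m_t,\Sigma_t)$ with $\Sigma_t\in\mathbb{S}^d_{++}$. Differentiating in $t$ (legitimate by the assumed differentiability of $m_t,\Sigma_t$ together with the Gaussian decay, which permits differentiation under the integral sign) gives
\[
\partial_t \hat p_t(\xi) \;=\; \Big(i\,\xi^\top \dot m_t \;-\; \tfrac12\,\xi^\top \dot\Sigma_t\,\xi\Big)\,\hat p_t(\xi).
\]

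Next I would transform the right-hand side of the claimed equation. Since $\dot m_t$ and $\dot\Sigma_t$ do not depend on $x$, they commute with the spatial derivatives and factor out of the transform; each $\partial_q$ then acts as multiplication by $\xi_q$ (up to a factor of $i$ fixed by the convention above). Consequently the divergence term $\nabla\cdot(\dot m_t\,p_t)$ produces a multiplier linear in $\xi$ and the second-order term $\tfrac12\sum_{q,r}\partial_q\partial_r(\dot\Sigma_{t,qr}\,p_t)$ a multiplier quadratic in $\xi$; after accounting for the integration-by-parts signs, their sum is exactly $\big(i\,\xi^\top\dot m_t-\tfrac12\,\xi^\top\dot\Sigma_t\,\xi\big)\hat p_t(\xi)=\partial_t\hat p_t(\xi)$. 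Because the Fourier transform is injective on integrable functions, and for each fixed $t$ both sides are Schwartz functions of $x$, equality of the transforms yields equality of the functions, which is the asserted Fokker--Planck identity.

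The only genuinely delicate points are bookkeeping rather than conceptual: justifying the interchange of $\partial_t$ with the $x$-integral, and carefully tracking the factors of $i$ and the sign produced by integration by parts on the divergence term, so that it aligns with the transport convention of (\ref{eq:continuity}) and (\ref{eq:diffusion}). Both are immediate from the nondegeneracy $\Sigma_t\in\mathbb{S}^d_{++}$ and the rapid decay of Gaussians and their derivatives. As a Fourier-free alternative, one can instead compute $\partial_t p_t = p_t\,\partial_t\log p_t$ directly, using $\partial_t\Sigma_t^{-1}=-\Sigma_t^{-1}\dot\Sigma_t\Sigma_t^{-1}$ and $\partial_t\log\det\Sigma_t=\Tr(\Sigma_t^{-1}\dot\Sigma_t)$, then expand the right-hand side through the score identity $\partial_q p_t = -[\Sigma_t^{-1}(x-m_t)]_q\,p_t$ and match the coefficients of the resulting quadratic polynomial in $x-m_t$; this reproduces the same identity at the cost of a longer calculation.
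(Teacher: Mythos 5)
Your proof is correct in substance but takes a genuinely different route from the paper's. The paper works entirely in real space: it differentiates $\log p_t(x) = C - \tfrac12\log\det\Sigma_t - \tfrac12(x-m_t)^\top\Sigma_t^{-1}(x-m_t)$ in $t$, separately expands the divergence and second-order terms using the Gaussian score $\nabla p_t = -\Sigma_t^{-1}(x-m_t)\,p_t$, and matches the resulting quadratic polynomials in $x-m_t$ --- exactly the ``Fourier-free alternative'' you sketch in your last sentence. Your primary route instead passes to the characteristic function, where $\partial_t\hat p_t(\xi) = \big(i\,\xi^\top\dot m_t - \tfrac12\,\xi^\top\dot\Sigma_t\,\xi\big)\hat p_t(\xi)$ and the spatial operators become polynomial multipliers; this reduces the claim to a one-line algebraic identity and avoids the trace and quadratic-form bookkeeping, at the cost of invoking injectivity of the Fourier transform and differentiation under the integral sign, both of which you correctly justify by Gaussian decay and nondegeneracy of $\Sigma_t$. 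One concrete point you must make explicit rather than defer to ``bookkeeping'': with your convention, $\widehat{\partial_q f}(\xi) = -i\,\xi_q\hat f(\xi)$, so the divergence term transforms to $-i\,\xi^\top\dot m_t\,\hat p_t$, and the multipliers match $\partial_t\hat p_t$ only if the drift enters as $-\nabla\cdot(\dot m_t\,p_t)$. That is indeed the correct identity --- it is what the paper's own proof establishes (it computes $-\nabla\cdot(\dot m_t\,p_t)=\dot m_t^\top\Sigma_t^{-1}(x-m_t)\,p_t$ and adds that to the diffusion term), and it is the form consistent with the Fokker--Planck equation (\ref{eq:diffusion}) --- but the proposition as printed carries the opposite sign on the divergence term. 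As written, your claim that the two multipliers ``sum exactly'' to $\partial_t\hat p_t$ silently fixes this sign; state the drift term with its minus sign so that your computation and the identity you are proving agree.
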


\begin{proof}
    Because $p_t(x)$ is Gaussian, we have
    \begin{equation}
        \log p_t(x) = C - \tfrac{1}{2}\log\det \Si_t - \tfrac{1}{2}(x-m_t)^\top\Si_t^{-1}(x-m_t).
    \end{equation}
    Differentiating with respect to $t$, we have
    \begin{equation}\label{logprobability}
        \frac{\partial}{\partial t}\log p_t = \frac{1}{p_t}\frac{\partial}{\partial t}p_t =- \tfrac{1}{2}\mathrm{tr}(\Si_t^{-1}\dot{\Si}_t) + \dot{m}_t^\top\Si^{-1}_t(x-m_t) + \tfrac{1}{2}(x-m_t)^\top\Si_t^{-1}\dot{\Si}_t\Si_t^{-1}(x-m_t).
    \end{equation}
    Further, for a Gaussian density $p_t(x)$, we may compute the drift term
    \begin{equation}\label{driftterm}
        -\nabla\cdot(\dot{m}_t\,p_t)  = \dot{m}^\top\Si^{-1}(x-m_t)p_t.
    \end{equation}
    Likewise, for the diffusion part
    \begin{equation}\label{diffusionterm}
        \tfrac{1}{2}\sum_{q,r}\partial_q\partial_r(\dot{\Sigma}_{t,qr}\,p_t) = \tfrac{1}{2}\sum_{q,r}\dot{\Sigma}\partial_q\partial_rp_t =\tfrac{p_t}{2}\left[(x-m_t)^\top\Si_t^{-1}\dot{\Si_t}\Si_t^{-1}(x-m_t) - \mathrm{tr}(\Si^{-1}_t\dot{\Si}_t)\right].
    \end{equation}
    Adding (\ref{driftterm}) and (\ref{diffusionterm}), we obtain the right-hand side of (\ref{logprobability}), which is the desired result.
\end{proof}

Using Proposition \ref{FPequationflowpath}, we see that the conditional probability path satisfies
\begin{equation}\label{conditionalFM}
    \frac{\partial}{\partial t}p_t(x|x_1) = \nabla\cdot(\dot{m}_t\,p_t(x|x_1)) + \frac{1}{2}\sum_{q,r}\partial_q\partial_r(\dot{\Sigma}_{t,q,r}\,p_t(x|x_1)),
\end{equation}
where, using our CDC-FM flow path in (\ref{eq:CDC-FM_flowpath}), we have 
\begin{equation}
    \begin{aligned}
        &\dot{m}_t(x_1) = x_1\\
        &\dot{\Si}_t(x_1)=\left[(1-t)\I + t\G(x_1)^{1/2}\right](\G(x_1)^{1/2}-\I).
    \end{aligned}
\end{equation}
We may then marginalise by taking expectations over $\nu$ in (\ref{conditionalFM}) to obtain (\ref{eq:diffusion}) with 
\begin{equation}
    \begin{aligned}
        &u_t(x) = \E_\nu[X_1|X_t=x]\\
        &A_t(x) = \E_\nu[\dot{\Si}_t(X_1)|X_t=x],
    \end{aligned}
\end{equation}
where we used the fact that, by Bayes' rule, $\int F(x_1)p_t(x|x_1)p_1(x)dx_1 = p_t(x)\E_\nu [F(X_1)|X_t=x]$ for any integrable function $F$.

\paragraph{Dirichlet form and carr\'{e} du champ.} 
The global smoothness introduced by the diffusive term induced by the flow path is measured by the (weighted) Dirichlet form $\mathcal{E}(f,g)$, which monotonically decreases along the evolution $p_t$. For well-behaved test functions $f,g:\R^d\to \R$ this reads
\begin{equation}\label{eq:dirichlet}
    \mathcal{E}(f,g):= \int_{\R^d} \Gamma(f,g)(x)\,dx := \frac{1}{2} \int_{\R^d}(L(fg) - fLg - gLf) \,dx = \int_{\R^d}\nabla f \cdot A_t(x)\nabla g\,dx.
\end{equation}
In contrast to $\mathcal{E}$, the integrand $\Gamma(f,g)$, known as the carr\'{e} du champ (CDC), measures the local, point-wise smoothness, and encodes the geometry of the underlying manifold \citep{jones2024diffusion}.
Note, we recover FM in the deterministic limit, $A_t(x)\equiv 0$ as $t\to 1$. Consequently, $\mathcal{E}=0$, meaning the associated generator has no intrinsic mechanism to dissipate energy, thus no smoothing.

Taken together, the spatially varying covariances introduce a source of randomness into the sample paths, which can be interpreted as an anisotropic diffusion. The CDC quantifies this diffusion and relates it to the smoothness of the data itself.

\section{Optimal estimation of the carr\'{e} du champ matrix}
\label{section:cdc_optimality}

In Section \ref{section:CDC}, we estimate the local geometry at a sample $x$ using the carr\'{e} du champ matrix $\G(x)$, which we define as the mean-centred covariance of the sample's neighbours. 
We now prove that this choice is optimal in the sense that it is the best local Gaussian approximation to the data.
Specifically, the Gaussian that maximises the expected log-likelihood of the kernel $P_x$ is the one whose mean and covariance match the empirical mean and covariance.

\begin{theorem} \label{optimalGM}
Let $P_{xy}$ be a transition kernel on $\mathbb{R}^d$. Let $P_x$ denote the probability measure supported on the $k$ nearest neighbours of $x$ obtained by restricting and renormalising $P_{xy}$ (\ref{eq:markov_kernel}).
Assume $P_x$ has finite second moments.
Then the unique maximiser over $m\in\mathbb{R}^d$ and $\Sigma$ over the space of positive definite matrices of the expected log-likelihood of a sample $Y$ from a Gaussian density \( \N(m, \Si) \)
\[
(m,\Si) \;\longmapsto\; \E_{Y\sim P_x}\!\left[\log \mathcal{N}(Y; m,\Sigma)\right]
\]
is given by matching the first two moments of \( P_x \):
\[
m^*(x)=\E_{Y\sim P_x}[Y],
\qquad
\Si^*(x):= \G(x) = \E_{P_x}\!\left[(Y-m^*)(Y-m^*)^\top\right].
\]
Equivalently, $\mathcal{N}(m^*,\Si^*)$ minimises $\mathrm{KL}(P_x\,\Vert\,\mathcal{N}(m,\Si))$.

\end{theorem}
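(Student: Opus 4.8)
The plan is to recognise the claim as the classical fact that the best Gaussian approximation to $P_x$ in cross-entropy is the moment-matched one, while being careful that $P_x$ is finitely supported and hence not absolutely continuous. Expanding the log-density,
\[
\log\N(y;m,\Si) = -\tfrac{d}{2}\log(2\pi) - \tfrac12\log\det\Si - \tfrac12(y-m)^\top\Si^{-1}(y-m),
\]
the functional $F(m,\Si):=\E_{Y\sim P_x}[\log\N(Y;m,\Si)]$ is finite for each positive-definite $\Si$ exactly because $P_x$ has finite second moments: the only $Y$-dependent term is a quadratic form, whose $P_x$-expectation is therefore finite.

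The central step I would use avoids forming $\KL(P_x\,\Vert\,\N)$ directly --- which is $+\infty$ for a discrete $P_x$ --- by comparing against the moment-matched Gaussian instead of against $P_x$ itself. Write $Q^\star:=\N(m^*,\Si^*)$ with $m^*,\Si^*$ as in the statement, and let $Q:=\N(m,\Si)$ be an arbitrary competitor. The difference $\log Q^\star(y)-\log Q(y)$ is a polynomial in $y$ of degree at most two, so its expectation against any measure depends only on that measure's first two moments. Since $P_x$ and $Q^\star$ share these moments by construction, I can swap the integrating measure:
\[
F(m^*,\Si^*)-F(m,\Si)=\E_{P_x}\!\big[\log Q^\star-\log Q\big]=\E_{Q^\star}\!\big[\log Q^\star-\log Q\big]=\KL\!\big(Q^\star\,\Vert\,Q\big)\ge 0,
\]
with equality iff $Q=Q^\star$ by Gibbs' inequality together with identifiability of the Gaussian family. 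This simultaneously shows that $(m^*,\Si^*)$ is the unique maximiser and delivers the stated KL equivalence in the only sense well-posed for discrete $P_x$.

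As an independent check I would confirm the stationary point by first-order conditions: $\nabla_m F=\Si^{-1}\big(\E_{P_x}[Y]-m\big)=0$ forces $m=m^*$, and reparametrising by the precision $\Lambda=\Si^{-1}$ turns the covariance objective into $\tfrac12\log\det\Lambda-\tfrac12\,\mathrm{tr}(\Lambda\, S_m)$ with $S_m=\E_{P_x}[(Y-m)(Y-m)^\top]$, whose unique stationary point $\Lambda^{-1}=S_m$ gives $\Si=\Si^*$; concavity of $\log\det$ in $\Lambda$ confirms it is a maximiser. I expect the only genuine obstacle to be conceptual rather than computational: the discreteness of $P_x$ makes the naïve ``maximum likelihood equals minimum KL'' reading literally ill-defined, and the clean resolution is precisely the degree-two log-ratio observation above, which relocates the divergence to live between two Gaussians, where it is finite and vanishes only at moment matching. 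Everything else is routine Gaussian algebra.
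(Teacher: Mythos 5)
Your proof is correct, but your primary argument takes a genuinely different route from the paper's. The paper proves the theorem by direct first-order conditions: it differentiates the expected log-likelihood in $m$ to get $m^*$, reparametrises in the precision $\Si^{-1}$ so the objective becomes $\tfrac12\log\det\Si^{-1}-\tfrac12\mathrm{tr}(\Si^{-1}S_x)+\text{const}$, and invokes strict concavity in $\Si^{-1}$ for uniqueness --- i.e.\ exactly the computation you relegate to your ``independent check.'' Your main argument instead observes that $\log\N(y;m^*,\Si^*)-\log\N(y;m,\Si)$ is a degree-two polynomial in $y$, so its expectation depends only on the first two moments of the integrating measure; since $P_x$ and $\N(m^*,\Si^*)$ share those moments by construction, the gap $F(m^*,\Si^*)-F(m,\Si)$ equals $\KL\bigl(\N(m^*,\Si^*)\,\Vert\,\N(m,\Si)\bigr)\ge 0$, with equality iff the parameters coincide. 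This buys two things the paper's proof does not: it avoids matrix calculus entirely, and it makes rigorous sense of the theorem's final sentence, since for a finitely supported $P_x$ the quantity $\KL(P_x\Vert\N(m,\Si))$ is literally $+\infty$ and the ``equivalently minimises KL'' claim is only well-posed after relocating the divergence to live between two Gaussians (or reading it as minimising cross-entropy). The paper's route is more elementary and self-contained; yours is cleaner on uniqueness and more honest about the degenerate KL. Both establish the stated result.
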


\begin{proof}
For $\Sigma\succ0$, the log-likelihood of a sample $Y\sim \mathcal{N}(m, \Si)$ is
\[
\log \N(Y) = -\frac{d}{2}\log(2\pi) - \frac{1}{2}\log\det \Si - \frac{1}{2}(Y - m)^\top \Si^{-1} (Y - m).
\]
Taking the expectation with respect to $Y \sim P_x$ gives
\[
\E_{Y \sim P_x}\left[\log \N(Y; m, \Si)\right] = -\frac{d}{2}\log(2\pi) - \frac{1}{2}\log \det\Si - \frac{1}{2}\E_{Y \sim P_x}\left[(Y - m)^\top \Si^{-1} (Y - m)\right].
\]
We can differentiate with respect to $m$ to get
\[
\frac{\partial}{\partial m} \E_{Y \sim P_x}\left[\log \N(Y; m, \Si)\right] = \Si^{-1} (\E_{Y \sim P_x}[Y] - m),
\]
which is zero at the maximum $m^*(x) = \E_{Y \sim P_x}[Y]$.
To find the optimal $\Si$, let $S_x=\E[(Y-m^*)(Y-m^*)^\top]$. We can rewrite the expectation term as 
\[
\E[\log \mathcal{N}(Y;m^*,\Si)]
= \tfrac{1}{2}\log\det\Si^{-1} - \tfrac{1}{2}\mathrm{tr}(\Si^{-1} S_x) + \text{const},
\]
which is strictly concave in $\Si^{-1}\succ0$.
% $\text{tr}(\Si^{-1} \E_{Y \sim P_x}[(Y - m^*)(Y - m^*)^\top])$.
Differentiating with respect to $\Si^{-1}$ we obtain
\[
\frac{\partial}{\partial \Si^{-1}} \E_{Y \sim P_x}\left[\log \N(Y; m^*, \Si)\right] = \frac{1}{2}\Si - \frac{1}{2}S_x.
\]
This quantity is zero at the maximum $\Si^* = \E_{Y \sim P_x}\left[(Y - m^*)(Y - m^*)^\top\right]$, and uniqueness follows from strict concavity. 
Thus, the optimal covariance is given by $\G := \E_{Y \sim P_x}\left[(Y - m^*)(Y - m^*)^\top\right]$, the centred covariance of the neighbours of $x$.
\end{proof}

This theorem justifies using the mean-centred local covariance of the diffusion kernel's neighbourhood as the statistically optimal covariance for a local Gaussian fit.

\paragraph{Rescaling the carré du champ matrix} \label{section:rescaling}

If the bandwidth of the kernel is set appropriately, the carré du champ matrix $\G(x^{(i)})$ will capture the local shape of the density near a training point $x^{(i)}$.
However, depending on the bandwidth, samples from $\N(x^{(i)}, \G(x^{(i)}))$ may be far from $x^{(i)}$.
To quantify this, we may diagonalise $\G = Q^\top \text{diag}(\lambda_1,...,\lambda_d) Q$, where $Q$ is the orthonormal matrix of principal components of $\G$, then $\G$ represents Gaussian noise with variance $\lambda_i$ in the direction of the component $Q_i$.
Then, if samples are drawn randomly from $\N(x, \G)$, their expected squared distance from $x$ is
\[
\E\big(\| X - x \|^2 \big) = \mathrm{tr}(\G).
\]
This means that excessive noise in directions normal to the manifold can reduce the fidelity of the geometric regularisation.
Thus, to control the maximum amount of noise added to the data by $\G$ in any direction, we need to rescale $\G$ such that the largest eigenvalue $\lambda_1$ is of the right scale.

As a heuristic, we would like the noise added by $\G(x^{(i)})$ to be small enough that it is contained in the gaps between the training point $x^{(i)}$ and its neighbours.
This way, the training signal is still dominated by the \textit{location} of the training data, with the carré du champ adding a small, unintrusive amount of \textit{directional} training on top of that.
If $\pi(x^{(i)})$ is the nearest neighbour of $x^{(i)}$, then we choose to ensure that all the eigenvalues of $\G(x^{(i)})$ are no larger than $\| x^{(i)} - \pi(x^{(i)}) \| ^2 / 9$.
This means that, in any given direction, the noise added has standard deviation at most $\| x^{(i)} - \pi(x^{(i)}) \| / 3$, so at least $99.7\%$ of it will be closer to $x^{(i)}$ than its nearest neighbour $\pi(x^{(i)})$.

This local heuristic is effective except in the case that $x^{(i)}$ is very isolated, in which case $\| x^{(i)} - \pi(x^{(i)}) \|$ is very large.
To avoid adding too much excess noise to these points, we will also cap the maximum noise level at $c_{\max}$, defined to be the $90^{th}$ percentile of all the local constraints $\| x^{(i)} - \pi(x^{(i)}) \| ^2 / 9$.
This limits the scale of the noise at the $10\%$ most outlying points in the data, which we can guarantee by ensuring that all the eigenvalues are below $c_{\max}^2$.

To meet both the local and global constraints, we rescale $\G(x^{(i)})$ by
\begin{equation}
\label{eq:cdc rescale constraint}
c_i = \frac{1}{\lambda_1^i}\min \Big( \frac{\| x^{(i)} - \pi(x^{(i)}) \| ^2}{9}, c_{\max}^2 \Big),
\end{equation}
where $\lambda_1^i$ is the largest eigenvalue of $\G(x^{(i)})$.
The largest eigenvalue of the rescaled matrix will then be the smaller of $\| x^{(i)} - \pi(x^{(i)}) \| ^2 / 4$ and $c_{\max}^2$.

The above method describes a heuristic for setting the size of the carré du champ at each point, but, in particular cases, we may attain better performance by further globally rescaling $\G(x_i)$.
We therefore use the carré du champ $\gamma c_i \G(x^{(i)})$, where $c_i$ is from (\ref{eq:cdc rescale constraint}) and $\gamma$ is a tuneable hyperparameter that 
defaults to $1.0$.

\section{Computational complexity and runtimes}\label{section:complexity}

The conditional paths in CDC-FM require $\G(x_1)^{1/2}$, for which we have to compute and diagonalise $\G(x_1)$.
Since we truncate the diffusion kernel to have only $k$ non-zero entries per point (the neighbours of the point), the carré du champ at each point has rank at most $k$, so we can avoid
working with full $d\times d$ covariance matrices. 
Instead, we project the neighbour differences into their $k$-dimensional span and compute $\G(x_1)$ in
this reduced basis.
The resulting $k\times k$ matrices capture the full spectrum while being far cheaper to store and diagonalise.
The dominant costs are therefore $O(k^2 d)$ for the projection step and $O(k^3)$ for the eigen-decomposition, rather than $O(d^3)$.
In practice, $k$ is small even when the dimension $d$ is high, so the computation effectively takes $\mathcal{O}(N(\log(N) + d))$ time and $\mathcal{O}(Nd)$ space.
The complexities of the different steps are given in Table \ref{tab:cdc_complexity}. We also report the empirical preprocessing time for the CIFAR-10 experiments in Table~\ref{tab:preprocessing_runtimes}.

\begin{table}[h]
\centering
\caption{\textbf{Compute and memory complexity.}
Here, $N$ is the number of training points, $k$ is the number of neighbours, and $d$ is the ambient dimension.
}
\begin{tabular}{lcc}
\toprule
\textbf{Step} & \textbf{Compute} & \textbf{Memory} \\
\midrule
k-NN graph construction &
$\mathcal{O}(N\log (N))$ 
% (typical) or $\mathcal{O}(n^2 d)$ (worst) 
&
$\mathcal{O}(Nk + Nkd)$ \\
Diffusion kernel &
$\mathcal{O}(Nk)$ &
$\mathcal{O}(Nk)$ \\
Form covariance terms in $\R^d$ &
$\mathcal{O}(Nkd)$ &
$\mathcal{O}(Nkd)$ \\
Project covariance terms to $\R^k$ &
$\mathcal{O}(N k^2 d)$ &
$\mathcal{O}(Nkd + Nk^2)$ \\
Diagonalise $\G(x_i)$ in $\R^k$ &
$\mathcal{O}(N k^3)$ &
$\mathcal{O}(Nk^2)$ \\
% Back-transform top $m$ eigenvectors &
% $\mathcal{O}(ndkm)$ &
% $\mathcal{O}(ndm)$ \\
\midrule
\textbf{Overall complexity} &
$\mathcal{O}\!\left(N(\log(N) + k^2 d + k^3)\right)$ &
$\mathcal{O}(Nkd + Nk^2)$ \\
\bottomrule
\end{tabular}
\label{tab:cdc_complexity}
\end{table}

\begin{table}[h]
\centering
\caption{\textbf{Preprocessing empirical runtimes.}
Runtimes are reported in seconds for CIFAR-10 subsets of varying sizes and were run on NVIDIA A10s.}
\begin{tabular}{lccc}
\toprule
\textbf{Dataset size} & \textbf{KNN graph (s)} & \textbf{$\G$ eigendecomposition (s)} & \textbf{Total (s)} \\
\midrule
1000 & 0.45 & 19.27 & 19.72 \\
2000 & 0.55 & 38.03 & 38.58 \\
3000 & 1.19 & 56.14 & 57.33 \\
4000 & 1.10 & 73.56 & 74.66 \\
5000 & 1.89 & 92.09 & 93.98 \\
\bottomrule
\end{tabular}
\label{tab:preprocessing_runtimes}
\end{table}

\paragraph{Computational Resources}
All experiments were feasible and primarily performed on NVIDIA A10s (24 GB). 
Some experiments were performed on NVIDIA H100s (80GB). 
\textit{Drosophila} experiments were performed on NVIDIA A100s (40GB). 

\section{Experiment details}\label{section:experiment_details}

\paragraph{Numerical solver.} Unless stated otherwise, for inference and likelihood computations, we use the adaptive step size solver \texttt{dopri5} with \texttt{atol=rtol=1e-5} using the \texttt{torchdiffeq} library \citep{chen_neural_2019}.

\paragraph{Circle, two circles, LiDAR, and torus experiments.}

These experiments use the same basic MLP architecture from \cite{lipman_flow_2024} with 4 hidden layers of dimension 512 and Swish activations.
We used the Adam optimiser with a learning rate of $10^{-3}$.
We summarise the CDC-specific hyperparameters in Table~\ref{tab:parameters}.
For the memorisation metric, we used a nearest-neighbour ratio cutoff of 0.2. 

\paragraph{Single cell experiment.} For the single cell experiments, we followed the setup in \cite{kapusniak2024metric} and used a $3$-layer MLP with width $1024$ and SeLU activation to learn the vector field. We used the AdamW optimiser with a learning rate of $10^{-3}$ and a weight decay of $10^{-5}$. For inference, we used the Euler solver for $100$ steps. We summarise the CDC-specific hyperparameters in Table~\ref{tab:singlecell-hyperparams}. We found that adding isotropic noise with variance $\sigma_{\min}$ in addition to the anisotropic CDC noise helped stabilise training.

\paragraph{\textit{Drosophila} experiment.} For the \textit{Drosophila} experiments, we learn the vector field with a transformer architecture previously used for flow-matching-based human motion synthesis \citep{hu2023motionflowmatchinghuman}.
To accommodate the simpler motion of \textit{Drosophila} compared to human motion, we down-scaled the network and used 4 transformer blocks with feedforward and key/value dimensions $d_\mathrm{ff}\! =\! d_\mathrm{kv}\! =\! 256$ with 4 attention heads and GELU activation. 
The model was trained on 2k training points for $10^5$ epochs using the Adam optimiser with a constant learning rate of $10^{-4}$ and batch size of 512. 
We summarise the CDC-specific hyperparameters in Table~\ref{tab:parameters}.
To evaluate NLL, percentage memorised, and sample quality, we used a test set of  25k points as well as 25k generated samples.
For the memorisation metric, we used a nearest-neighbour ratio cutoff of 0.6 (Fig. \ref{figure_s3}). 
As in \cite{deangelis}, we fitted UMAP with 30 nearest neighbours on standardised data points, first subtracting the per-limb-coordinate mean across time steps from each trajectory and then scaling each of the 372 dimensions across all data points to zero mean and unit variance.
UMAP was fitted on the larger test set, and the resulting model was subsequently used to project the training points into the UMAP embedding space.

\paragraph{CIFAR-10 experiment.} For the CIFAR-10 experiments, we followed the setup in \citet{tong_improving_2024} with the exception that we did not use the standard flip transform for data augmentation to focus on the fundamental differences between FM and CDC-FM losses. We used a UNet \citep{dhariwal2021diffusion} with 128 channels, depth of 2, channel multiples $[1, 2, 2, 4]$, 4 heads, 64-channel heads, attention resolution of 16, and 0.1 dropout. Following \cite{tong_improving_2024}, we used a constant learning rate $2\times 10^{-4}$, gradient clipping with norm 1.0, and exponential moving average weights with decay 0.9999.

We summarise the CDC-specific hyperparameters in Table~\ref{tab:cifar-hyperparams}. To tune the $d_{cdc}$ and $\gamma$ hyperparameters, we computed the empirical covariances as described in Section~\ref{section:CDC}, we used the closed-form $p_1$ to generate 10k samples and computed the FID on a validation set consisting of 10k images. For negative log likelihoods, we report bits per dimension as done in \cite{lipman_flow_2024} using the regular uniform dequantization with $K=1$. FID is computed using the inception v3 embeddings \citep{szegedy2016rethinking} and the cleanfid package \citep{parmar2021cleanfid}.
For the memorisation metric, we used a nearest-neighbour ratio cutoff of 0.2.

\paragraph{Celeba-HQ experiment.} For the Celeba-HQ, we followed the setup in \cite{dao2023flow} with the exception that we did not use the standard flip transform for data augmentation to focus on the fundamental differences between FM and CDC-FM losses. We use a UNet \citep{dhariwal2021diffusion}. For NLL we report bits per dimension of a test set in latent space. For FID computation we use the inception v3 embeddings \citep{szegedy2016rethinking}. The CDC-FM-specific parameters were $k=256$, $k_{bw}=8$, $d_{cdc}=4$, $\gamma=0.5$.

\begin{table}[h]
    \centering
    \captionof{table}{\textbf{Model parameters for different experiments.}}
    \label{tab:parameters}
    \resizebox{\textwidth}{!}{%

\begin{tabular}{lllllll}
\toprule
\multicolumn{1}{c}{\multirow{2}{*}{\textbf{Parameter}}} & 
\multicolumn{1}{c}{\multirow{2}{*}{\textbf{Description}}} & 
\multicolumn{5}{c}{\textbf{Experiments}} 
\\ 
\multicolumn{1}{c}{} & 
\multicolumn{1}{c}{} & 
\multicolumn{1}{c}{\begin{tabular}[c]{@{}c@{}}\textbf{Circle}
\\ (Fig. \ref{figure_1})\end{tabular}} 
& 
\multicolumn{1}{c}{\begin{tabular}[c]{@{}c@{}}\textbf{LiDAR}\\ (Fig. \ref{figure_2})\end{tabular}} & 
\multicolumn{1}{c}{\begin{tabular}[c]{@{}c@{}}\textbf{Two-circles}\\ (Fig. \ref{figure_3})\end{tabular}} & 
\multicolumn{1}{c}{\begin{tabular}[c]{@{}c@{}}\textbf{\textit{Drosophila}}\\ (Fig. \ref{figure_4})\end{tabular}} & 
\multicolumn{1}{c}{\begin{tabular}[c]{@{}c@{}}\textbf{$d$-Torus}\\ (Fig. \ref{figure_5})\end{tabular}}
\\ 
\midrule
k & 
Maximal nearest neighbours $w_\epsilon$ & 
\multicolumn{1}{c}{3} & 
\multicolumn{1}{c}{32} & 
\multicolumn{1}{c}{3} & 
\multicolumn{1}{c}{128} & 
\multicolumn{1}{c}{32}
\\
$k_{bw}$ & 
Bandwidth of $w_\epsilon$ & 
\multicolumn{1}{c}{8} & 
\multicolumn{1}{c}{8} & 
\multicolumn{1}{c}{8} & 
\multicolumn{1}{c}{8} & 
\multicolumn{1}{c}{8}
\\
$d_{cdc}$ & 
Rank of $\G$ & 
\multicolumn{1}{c}{1} & 
\multicolumn{1}{c}{2} & 
\multicolumn{1}{c}{1} & 
\multicolumn{1}{c}{2, 4, 8, 16} & 
\multicolumn{1}{c}{$d$} 
\\
$\gamma$ & 
Scaling of $\G$ & 
\multicolumn{1}{c}{0.3} & 
\multicolumn{1}{c}{1.0} & 
\multicolumn{1}{c}{0.7} & 
\multicolumn{1}{c}{0.3} & 
\multicolumn{1}{c}{1.0}
\\ 
\bottomrule
\end{tabular}%
}
\end{table}

\begin{table}[h]
    \centering
    \caption{\textbf{Model parameters for the single-cell experiment.}}
    \label{tab:singlecell-hyperparams}
    \resizebox{\textwidth}{!}{%
\begin{tabular}{llllll|}
\toprule
\multicolumn{1}{c}{\multirow{2}{*}{\textbf{Parameter}}} & 
\multicolumn{1}{c}{\multirow{2}{*}{\textbf{Description}}} & 
\multicolumn{4}{c}{\textbf{Single Cell Experiments}} \\ 
\multicolumn{1}{c}{} & 
\multicolumn{1}{c}{} & 
\multicolumn{1}{c}{\textbf{Cite}} & 
\multicolumn{1}{c}{\textbf{Cite + OT}} & 
\multicolumn{1}{c}{\textbf{Multi}} & 
\multicolumn{1}{c}{\textbf{Multi + OT}} \\ \midrule
k & Maximal nearest neighbours $w_\epsilon$ & 
\multicolumn{1}{c}{256} & 
\multicolumn{1}{c}{256} & 
\multicolumn{1}{c}{256} & 
\multicolumn{1}{c}{256} \\
$k_{bw}$ & Bandwidth of $w_\epsilon$ & 
\multicolumn{1}{c}{8} & 
\multicolumn{1}{c}{8} & 
\multicolumn{1}{c}{8} & 
\multicolumn{1}{c}{8} \\
$d_{cdc}$ & Rank / CDC dimension & 
\multicolumn{1}{c}{8} & 
\multicolumn{1}{c}{4} & 
\multicolumn{1}{c}{4} & 
\multicolumn{1}{c}{2} \\
$\gamma$ & Scaling of $\G$ & 
\multicolumn{1}{c}{0.5} & 
\multicolumn{1}{c}{0.4} & 
\multicolumn{1}{c}{0.4} & 
\multicolumn{1}{c}{0.1} \\
$\sigma_{\min}$ & Isotropic noise& 
\multicolumn{1}{c}{0.4} & 
\multicolumn{1}{c}{0.3} & 
\multicolumn{1}{c}{0.2} & 
\multicolumn{1}{c}{0.4} \\
\bottomrule
\end{tabular}%
}
\end{table}

\begin{table}[h]
    \centering
    \captionof{table}{\textbf{Model parameters for the different CIFAR-10 experiments.}}
    \label{tab:cifar-hyperparams}
    \resizebox{\textwidth}{!}{%
\begin{tabular}{llllllllll|}
\toprule
\multicolumn{1}{c}{\multirow{2}{*}{\textbf{Parameter}}} & 
\multicolumn{1}{c}{\multirow{2}{*}{\textbf{Description}}} & 
\multicolumn{8}{c}{\textbf{Dataset size}} 
\\ 
\multicolumn{1}{c}{} & 
\multicolumn{1}{c}{} & 
\multicolumn{1}{c}{\textbf{250}} & 
\multicolumn{1}{c}{\textbf{500}} & 
\multicolumn{1}{c}{\textbf{750}} & 
\multicolumn{1}{c}{\textbf{1000}} & 
\multicolumn{1}{c}{\textbf{2000}} & 
\multicolumn{1}{c}{\textbf{3000}} & 
\multicolumn{1}{c}{\textbf{4000}} & 
\multicolumn{1}{c}{\textbf{5000}} 
\\ 
\midrule
k & Maximal nearest neighbours $w_\epsilon$ & 
\multicolumn{1}{c}{256} & 
\multicolumn{1}{c}{256} & 
\multicolumn{1}{c}{256} & 
\multicolumn{1}{c}{256} & 
\multicolumn{1}{c}{256} & 
\multicolumn{1}{c}{256} & 
\multicolumn{1}{c}{256} & 
\multicolumn{1}{c}{256}
\\
$k_{bw}$ & Bandwidth of $w_\epsilon$ & 
\multicolumn{1}{c}{8} & 
\multicolumn{1}{c}{8} & 
\multicolumn{1}{c}{8} & 
\multicolumn{1}{c}{8} & 
\multicolumn{1}{c}{8} & 
\multicolumn{1}{c}{8} & 
\multicolumn{1}{c}{8} & 
\multicolumn{1}{c}{8}
\\
$d_{cdc}$ & Rank / CDC dimension & 
\multicolumn{1}{c}{8} & 
\multicolumn{1}{c}{16} & 
\multicolumn{1}{c}{16} & 
\multicolumn{1}{c}{8} & 
\multicolumn{1}{c}{8} & 
\multicolumn{1}{c}{8} & 
\multicolumn{1}{c}{16} & 
\multicolumn{1}{c}{8}
\\
$\gamma$ & Scaling of $\G$ & 
\multicolumn{1}{c}{2.0} & 
\multicolumn{1}{c}{0.9} & 
\multicolumn{1}{c}{0.8} & 
\multicolumn{1}{c}{1.0} & 
\multicolumn{1}{c}{1.0} & 
\multicolumn{1}{c}{0.7} & 
\multicolumn{1}{c}{0.5} & 
\multicolumn{1}{c}{0.7}
\\
\bottomrule
\end{tabular}%
}
\end{table}

\clearpage
\section{Supplementary tables}

\begin{table}[h]
\centering
\caption{\textbf{Comparison between FM and CDC-FM on LiDAR landscape after 4k epochs.}}

% LIDAR ANALYSIS RESULTS (Epoch selection: fixed)
% 4k epochs
\resizebox{\linewidth}{!}{
\begin{tabular}{ccccccccc}
\toprule
\textbf{Dataset}
& \multicolumn{2}{c}{\textbf{Generalisation (NLL)} $\downarrow$}
& \multicolumn{2}{c}{\textbf{Memorisation} $\downarrow$}
& \multicolumn{2}{c}{\textbf{Numerical efficiency (NFE)} $\downarrow$}
& \multicolumn{2}{c}{\textbf{Distance to manifold} $\downarrow$} \\
\textbf{Size}
& \textbf{FM} & \textbf{CDC-FM}
& \textbf{FM} & \textbf{CDC-FM}
& \textbf{FM} & \textbf{CDC-FM}
& \textbf{FM} & \textbf{CDC-FM} \\
\midrule
40 & 2.26 & \textbf{1.92} & 5.1 & \textbf{3.2} & \textbf{74} & \textbf{74} & \textbf{105} & 122  \\
80 & \textbf{1.51} & 1.52 & 2.1 & \textbf{1.8} & \textbf{68} & \textbf{68} & \textbf{116} & 122 \\
120 & \textbf{1.25} & 1.28 & 1.9 & \textbf{1.8} & \textbf{68} & \textbf{68} & \textbf{99} & 106 \\
160 & \textbf{1.20} & 1.22 & \textbf{1.5} & \textbf{1.5} & \textbf{62} & 68 & \textbf{101} & 106 \\
200 & \textbf{1.10} & 1.12 & 1.8 & \textbf{1.6} & \textbf{74} & \textbf{74} & \textbf{76} & 80 \\
\bottomrule
\end{tabular}
}

\label{tab: lidar 4k}
\end{table}

\begin{table}[h]
\centering
\caption{\textbf{Comparison between FM and CDC-FM on LiDAR landscape after 16k epochs.}}

% LIDAR ANALYSIS RESULTS (Epoch selection: fixed)
% 16k epochs
\resizebox{\linewidth}{!}{
\begin{tabular}{ccccccccc}
\toprule
\textbf{Dataset}
& \multicolumn{2}{c}{\textbf{Generalisation (NLL)} $\downarrow$}
& \multicolumn{2}{c}{\textbf{Memorisation} $\downarrow$}
& \multicolumn{2}{c}{\textbf{Numerical efficiency (NFE)} $\downarrow$}
& \multicolumn{2}{c}{\textbf{Distance to manifold} $\downarrow$} \\
\textbf{Size}
& \textbf{FM} & \textbf{CDC-FM}
& \textbf{FM} & \textbf{CDC-FM}
& \textbf{FM} & \textbf{CDC-FM}
& \textbf{FM} & \textbf{CDC-FM} \\
\hline
40 & 3.50 & \textbf{2.23} & 32.4 & \textbf{7.5} & 92 & \textbf{80} & \textbf{65} & 101 \\
80 & 2.16 & \textbf{1.66} & 15.4 & \textbf{6.5} & 98 & \textbf{86} & \textbf{56} & 72 \\
120 & 1.65 & \textbf{1.45} & 11.0 & \textbf{6.3} & \textbf{98} & 104 & \textbf{50} & 62 \\
160 & 1.30 & \textbf{1.22} & 4.8 & \textbf{3.3} & 98 & \textbf{92} & \textbf{56} & 65 \\
200 & 1.34 & \textbf{1.24} & 5.6 & \textbf{3.5} & \textbf{98} & \textbf{98} & \textbf{49} & 56 \\
\bottomrule
\end{tabular}
}
\label{tab: lidar 16k}
\end{table}

\section{Negative Log-Likelihood Computation}\label{sec:nll_appendix}

To evaluate the generalisation capability of our models, we compute the negative log-likelihood (NLL) on held-out test data. 
Recall that the flow matching model defines a vector field $u_t(x)$ that generates a probability path $p_t(x)$ transforming a source distribution $p_0$ (typically $\mathcal{N}(0,\mathbf{I})$) to the model distribution $p_1 \approx \nu$.
The log-density of a test point $x$ under the model is computed using the instantaneous change of variables formula for Continuous Normalising Flows \citep{chen_neural_2019}:
\begin{equation}
    \log p_1(x) = \log p_0(\psi_1^{-1}(x)) - \int_0^1 \nabla \cdot u_t(\psi_t(\psi_1^{-1}(x))) \, dt,
\end{equation}
where $\psi_t$ is the flow map generated by $u_t$, and the integral is taken along the characteristic path from $t=0$ to $t=1$.

In high dimensions, explicitly computing the divergence $\nabla \cdot u_t = \mathrm{tr}(\partial u_t / \partial x)$ is computationally expensive ($\mathcal{O}(d^2)$). 
To ensure numerical stability and scalability, we employ the Hutchinson trace estimator \citep{hutchinson1989stochastic}, which approximates the divergence using random noise vectors $\epsilon \sim \mathcal{N}(0, \mathbf{I})$:
\begin{equation}
    \nabla \cdot u_t(x) = \E_{\epsilon}\left[\epsilon^\top \frac{\partial u_t(x)}{\partial x} \epsilon\right].
\end{equation}
This reduces the complexity to $\mathcal{O}(d)$ using vector-Jacobian products.
In our experiments, we solve the combined ODE for the particle trajectory and the log-density change using the \texttt{torchdiffeq} library with an adaptive step-size solver (\texttt{dopri5}) and tolerances of $10^{-5}$.
% For the image datasets (CIFAR-10), we apply standard uniform dequantization to the discrete pixel values to obtain continuous densities \citep{lipman_flow_2024}.

\clearpage
\section{Supplementary figures}

\begin{figure}[h]
    \centering
    \includegraphics[width=0.8\textwidth]{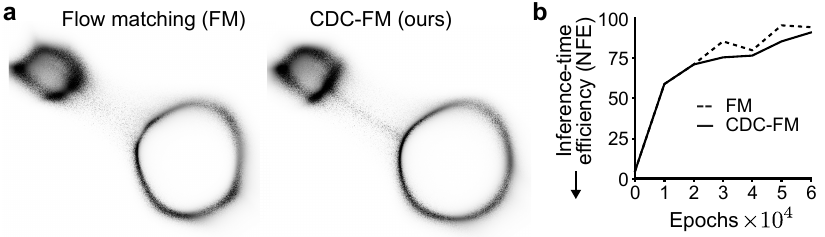}
    \caption{\textbf{Supplementary figures for the two-circles experiment.} 
    \textbf{a} Samples from FM and CDC-FM, trained on the two-circles dataset at an epoch early in the training (10k), when the quality (distance to manifold) is still low, but generalisation is high.
    \textbf{b} Inference-time numerical efficiency of FM and CDC-FM.
    }
    \label{figure_s1}
\end{figure}

\begin{figure}[h]
    \centering
    \includegraphics[width=0.8\textwidth]{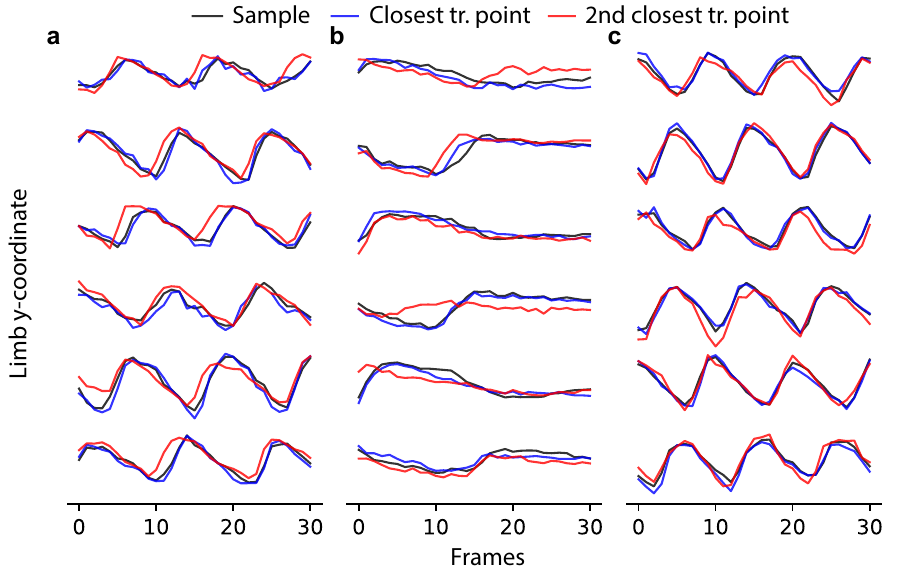}
    \caption{\textbf{Memorised samples in the \textit{Drosophila} motion capture dataset.} 
    Three sample time series showing the longitudinal motion for the six limbs for \textbf{a} $M=0.6$, \textbf{b} $M=0.6$ and \textbf{c} $M=0.58$.
    }
    \label{figure_s3}
\end{figure}

\begin{figure}[h]
    \centering
    \includegraphics[width=0.6\textwidth]{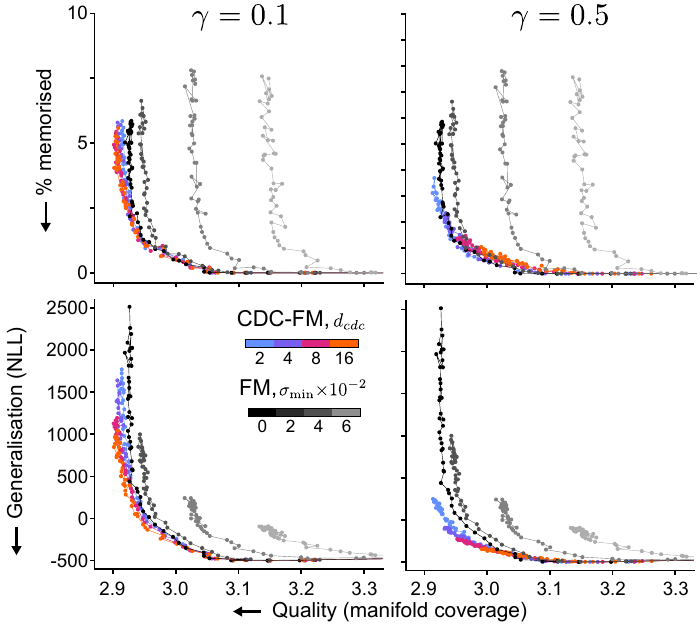}
    \caption{\textbf{Quality-generalisation and quality-memorisation for the \textit{Drosophila} motion capture data.} 
    Same as Fig. \ref{figure_4}b,c, but with CDC rescaling parameter $\gamma=0.1,0.5$.
    }
    \label{figure_s4}
\end{figure}

\begin{figure}[h]
    \centering
    \includegraphics[width=\textwidth]{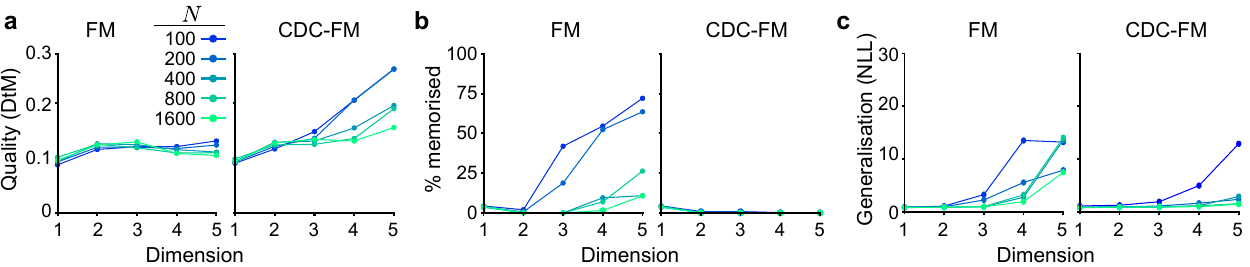}
    \caption{\textbf{Synthetic experiment on toroidal manifold with additive Gaussian noise.} Effect of data dimension on 
    \textbf{a} generalisation, 
    \textbf{b} memorisation and
    \textbf{c} sample quality.
    Noise level: $\sigma=0.02$.
    }
    \label{figure_s2}
\end{figure}

\begin{figure}[h]
    \centering
    \includegraphics[width=\textwidth]{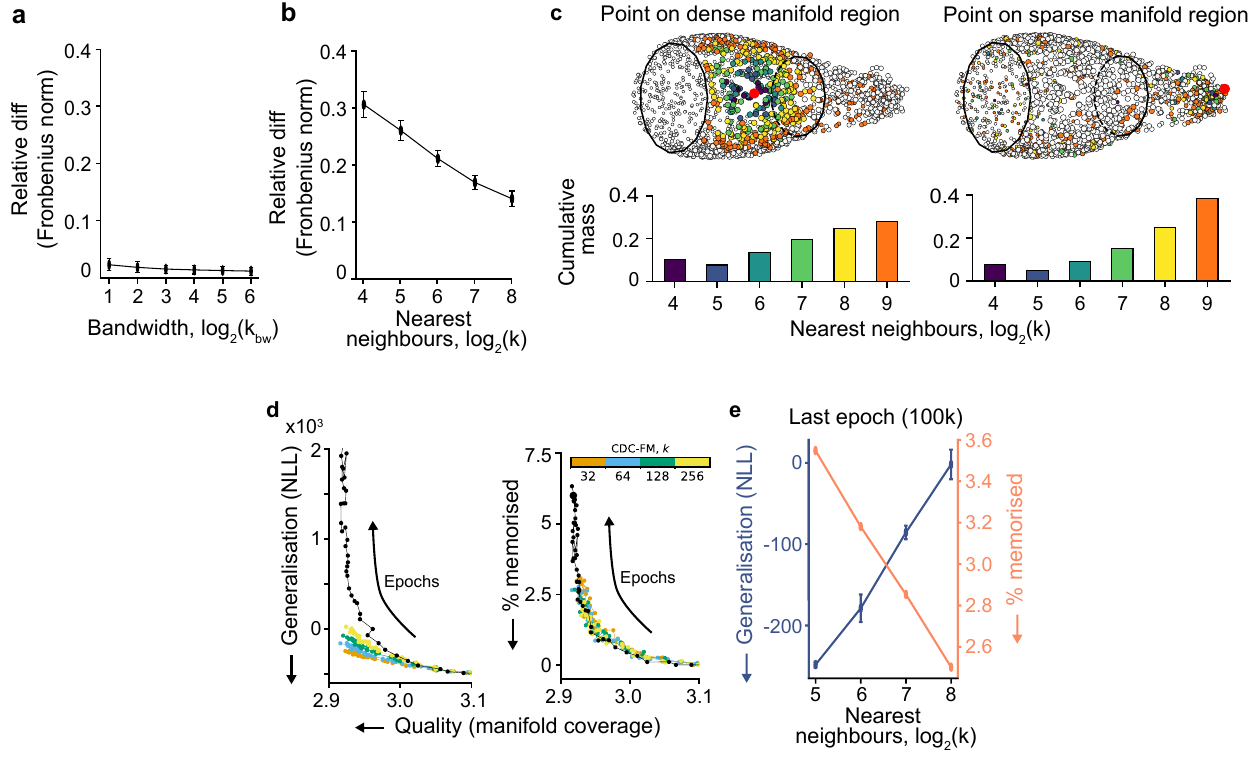}
    \caption{\textbf{Ablation studies corresponding to Fig. \ref{figure_4}.}  
    \textbf{a} Effect of doubling the bandwidth $k_{bw}$ on the CDC field at a given value ($k=128$). Minimal effect observed.
    \textbf{b} Effect of doubling the number of nearest neighbours $k$ on the CDC field at a given value ($k_{bw}=8$). The CDC field is less sensitive to changes in $k$ for higher $k$ values.
    \textbf{c} Neighbourhood of a point in a dense (left) and sparse (right) manifold region. Sparse regions can lead to shortcut connections that do not respect the manifold geodesic distance. The kernel density estimate assigns large mass to these points, biasing the CDC estimate.
    \textbf{d} Influence of the choice of $k$ on the overall generalisation and memorisation performance. CDC-FM provides gains over FM for all choices of $k$.
    \textbf{e} Generalisation and memorisation at the final epoch (100k, mean and std over 3 seeds). The overall performance gain (although improved compared to FM for all choices of $k$) is in a tradeoff between generalisation and memorisation. Small $k$ leads to better tangent space estimates and thus better generalisation, but yields less strong regularisation. Large $k$ leads to worse tangent space estimates, so generalisation drops, but overall reduces memorisation.
    }
    \label{figure_s5}
\end{figure}

\begin{figure}[ht]
    \centering
    % First subfigure
    \begin{subfigure}[b]{0.45\textwidth}
        \centering
        \includegraphics[width=\textwidth]{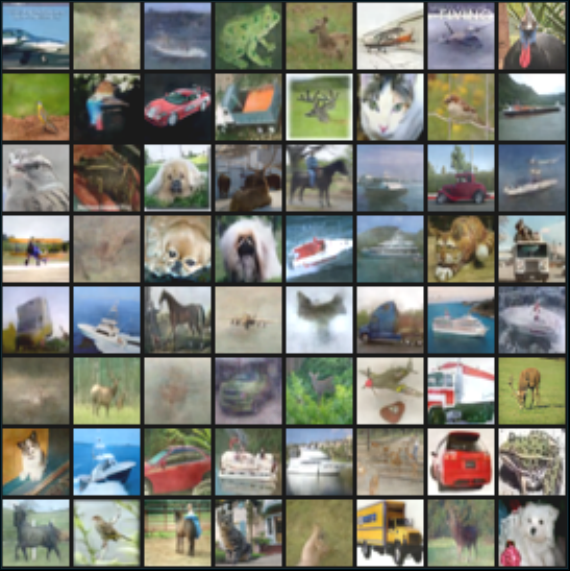}
        \caption{FM}
        \label{fig:subfig1}
    \end{subfigure}
    \hfill
    % Second subfigure
    \begin{subfigure}[b]{0.45\textwidth}
        \centering
        \includegraphics[width=\textwidth]{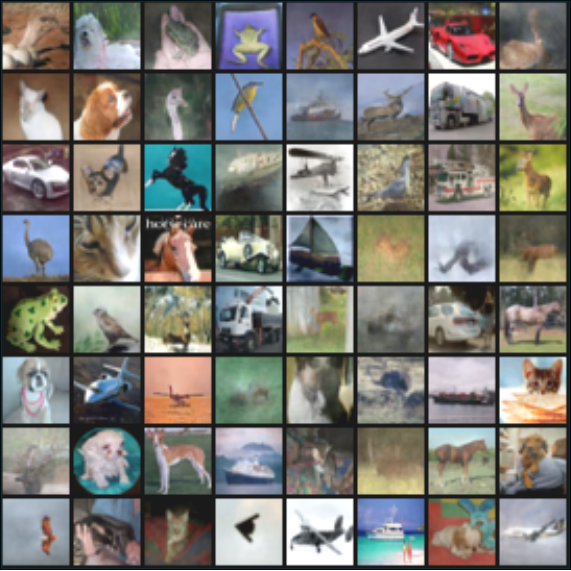}
        \caption{CDC-FM}
        \label{fig:subfig2}
    \end{subfigure}
    
    \caption{\textbf{Visual comparison of generated images from FM and CDC-FM models.} Models were trained 40k epochs on 5k images from CIFAR-10. Both models have similar visual quality.}
    \label{fig:two_subfigures}
\end{figure}

\end{document}